\documentclass[11pt]{article}

\usepackage[a4paper,margin=2.5cm]{geometry}
\usepackage{amsmath,amssymb,amsthm}
\newtheorem{theorem}{Theorem}[section]
\newtheorem{lemma}[theorem]{Lemma}
\newtheorem{definition}[theorem]{Definition}
\newtheorem{proposition}[theorem]{Proposition}

\usepackage{bm}
\usepackage{hyperref}
\usepackage{tikz}
\usepackage{pgfplots}
\pgfplotsset{compat=1.18}
\usepgfplotslibrary{fillbetween}
\usetikzlibrary{intersections}

\title{A Minimal $\kappa$--$\tau$ Logic for Risk-Sensitive Abduction}

\author{Remo Pareschi\\
STAKE Lab, University of Molise\\
86100 Campobasso, Italy\\
\texttt{remo.pareschi@unimol.it}}

\date{}

\begin{document}

\maketitle

\begin{abstract}
Standard approaches to abductive reasoning can retain multiple
candidate explanations, but they do not generally combine explicit
compositional cross-hypothesis interaction with an internal,
rival-sensitive commitment judgment.
This paper argues that in risk-sensitive domains---where premature
commitment carries asymmetric downside costs---the timing of
commitment is itself a governed decision that the inferential
apparatus should formally represent.

We present a minimal $\kappa$--$\tau$ logical framework built on
two primitives: epistemic interaction among hypotheses ($\kappa$)
and a normative commitment threshold ($\tau$). The resulting
system models pre-collapse plausibility, interaction-dependent
synthesis, and threshold-based commitment. Hypotheses may coexist,
reinforce or inhibit one another, and form emergent composite
explanations, while collapse into committed conclusions is
regulated by governance constraints rather than forced by inference
alone. The logic is developed in two complementary modes sharing the
interaction relation and the governance apparatus: a \emph{synthetic} mode, in which
atomic hypotheses are composed upward into emergent explanations, and
an \emph{analytic} mode, in which complex observed states of affairs
are decomposed into causal clusters of latent factors, with
inter-cluster interaction and threshold-governed commitment operating
at both the cluster and the factor level. The framework provides
formal machinery for domains in which the distinction between
\emph{highly likely} and \emph{commit-worthy} is operationally
consequential---where premature collapse of uncertainty carries risks
that the logic itself is designed to prevent.

The $\kappa$--$\tau$ logic is positioned as the symbolic governance
layer of a neurosymbolic architecture: its epistemic parameters
(hypothesis weights, interaction coefficients) are naturally estimated
by neural components---semantic embeddings and generative models, as
demonstrated in existing computational realizations---while its
normative parameters (commitment threshold, activation floor,
rivalry margin) remain
under explicit human governance. The interface between estimable
epistemic structure and regulated commitment is thereby a formal
feature of the logic itself, yielding transparent and auditable abductive
reasoning over neurally supplied content.
\end{abstract}

\section{Introduction}
\label{sec:intro}
Abductive reasoning---the mode of inference that Peirce
\cite{peirce1935abduction} identified as the source of explanatory
hypotheses---concerns the formation of plausible explanations
under incomplete, ambiguous, or contradictory information. Standard
approaches to abduction---whether logic-based, probabilistic, or
set-covering---can and often do retain multiple explanations: logical
systems may return several abductive solutions, Bayesian systems
maintain posterior distributions, and set-covering explanations are
characteristically multi-factor sets. What they do not generally
provide is the combination developed here: explicit compositional
interaction \emph{between} retained hypotheses, and an internal,
rival-sensitive judgment governing \emph{when} the reasoning process
is entitled to commit.

This paper proposes an alternative perspective in which
uncertainty is not merely a transitional inconvenience but a structured
epistemic state. Hypotheses are allowed to coexist, interact, reinforce,
or inhibit one another. Collapse into committed conclusions is governed
by decision constraints rather than enforced by inference alone---particularly
critical in risk-sensitive contexts where premature commitment carries
high downside costs.

The central observation motivating this work is that, across
classical and probabilistic approaches to abduction, what is
typically external to the abductive inference itself is a distinct,
interaction-sensitive criterion governing \emph{when} a plausible
explanation becomes actionable: acceptance is delegated to a
selection rule applied after inference, rather than represented
within it. In many real-world reasoning contexts,
however, agents may recognize hypotheses as plausible while postponing
commitment to them due to risk, conflict, or insufficient explanatory
integration. Commitment may therefore be deferred until evidence favours
one direction over another, potentially enabling reconciliation between
hypotheses initially perceived as heterogeneous or even contradictory.

The framework developed here formalizes this distinction through two
complementary mechanisms, captured by two parameters:

\begin{itemize}
    \item $\kappa$: an epistemic interaction operator that captures
    constructive and destructive relations among hypotheses, allowing
    explanatory structures to reinforce or inhibit one another within
    a shared epistemic state
    \item $\tau$: a normative commitment threshold that separates
    epistemic activation from actionable acceptance, determining when
    hypotheses move from plausibility to commitment
\end{itemize}

In this sense, the logic can be interpreted as a formal abstraction of
\emph{governed abduction}---abductive reasoning in which the timing of
commitment is itself a regulated decision. It is related to threshold
models of belief, non-monotonic reasoning, and formal accounts of
explanation, while extending these traditions with an explicit semantics
of epistemic interaction among competing explanations.

The framework did not arise in a vacuum. It distils the governance
mechanics of the Quantum Abduction framework \cite{pare1}, in which
hypothesis coexistence, interference, and governed collapse were
implemented computationally through semantic embeddings and generative
AI and exercised on forensic, clinical, and historical case studies,
building in turn on an empirical demonstration of interaction among
competing heuristics in strategic reasoning
\cite{ghisellini2025entangled}. The programme has since been extended
on the analytic side: companion work on Analytic Abduction
\cite{pareschi2026analytic} reverses the compositional orientation,
decomposing complex observed states of affairs into causal clusters
of latent factors under the same governance discipline, with
particular attention to the legibility of suspended decomposition as
a coordination object between human and artificial agents. The
present paper isolates the logical kernel of that entire apparatus: a
minimal semantics in which the interaction and commitment structure
can be studied on its own terms, independently of any particular
computational realization. It develops the logic first in its
\emph{synthetic} mode---composing atomic hypotheses upward into
emergent explanations---and then in its \emph{analytic} mode,
formalizing causal decomposition, inter-cluster interaction, and
two-level commitment within the same semantic framework
(Section~\ref{sec:analytic}), thereby discharging what earlier
formulations of the logic left as future work.

This substantive positioning naturally fits within a neurosymbolic
perspective. The
$\kappa$--$\tau$ logic is, by design, the symbolic half of a
neurosymbolic division of labour: its \emph{epistemic} parameters---the
weight function $w$ and the interaction structure $\kappa$---are
quantities that existing implementations already estimate from the
geometry of neural sentence embeddings, with generative models
proposing the hypotheses and syntheses over which the logic operates
\cite{pare1,pareschi2026analytic}; its \emph{normative} parameters---the
commitment threshold $\tau$ and the activation floor
$\epsilon$---are not learned but set by governance. The boundary
between what may be learned and what must be governed is thus drawn
at the $\kappa$/$\tau$ interface (with the provenance
qualifications made explicit in Section~\ref{sec:neurosymbolic}), and
Section~\ref{sec:neurosymbolic} makes this architectural reading
explicit.

The framework introduced here should therefore be understood as a
minimal logical kernel whose semantic structure opens the way to
further meta-theoretic analysis, including axiomatization, structural
properties of hypothesis interaction, and extensions to richer
abductive reasoning settings.

The rest of this paper is structured as follows. Section~2 introduces
the formal language and the notion of abductive states that constitute
the basic epistemic setting of the framework. Section~3 defines the
scoring semantics that evaluates hypotheses and composite explanations
within an abductive state. Section~4 presents the main inference
principles governing plausibility and commitment. Section~5 establishes
a basic structural property of the framework, clarifying how interaction
among hypotheses affects their epistemic status. Section~6 studies
derivation and closure conditions for abductive reasoning within the
system. Section~7 provides an algebraic perspective on the logic and its
interaction structure. Section~8 provides an illustrative example
of suspended abductive reasoning in a crisis-analysis setting.
Section~9 extends the logic from the synthetic to the analytic mode,
formalizing governed causal decomposition and continuing the
crisis-analysis example in the analytic direction. Section~10
articulates the neurosymbolic realization of the framework.
Section~11 situates the framework relative to related work, while
Section~12 outlines directions for further development. Section~13
concludes.

\section{Language and Abductive States}
\label{sec:language}
\subsection{Language}

Let $\mathsf{At}$ be a set of atomic propositions partitioned into
hypotheses $\mathsf{H}$ and (optionally) observations $\mathsf{O}$.
The language is two-sorted, separating \emph{content formulas},
which receive scores, from \emph{judgments}, which are evaluated by
satisfaction:

\[
t ::= h \mid (t \otimes t)
\qquad (h \in \mathsf{H})
\tag{hypothesis terms}
\]

\[
\pi ::= a \mid (\pi \otimes \pi) \mid (\pi \wedge \pi)
\mid (\pi \vee \pi)
\qquad (a \in \mathsf{At})
\tag{positive content formulas}
\]

\[
\chi ::= \pi \mid \neg \chi \mid (\chi \wedge \chi)
\mid (\chi \vee \chi)
\tag{content formulas}
\]

\[
J ::= P\,\chi \mid C_{\tau}\,\chi
\tag{judgments}
\]

\noindent where:

\begin{itemize}
    \item $P\chi$: the judgment that $\chi$ is epistemically plausible
    \item $C_{\tau}\chi$: the judgment that $\chi$ is commit-worthy
    \item $\otimes$: synthesis operator, admitted exactly on positive
    (negation-free, judgment-free) content formulas, with hypothesis
    terms as its distinguished fragment
\end{itemize}

The typing is deliberate and enforced by the grammar. Synthesis
composes explanatory content, so $\otimes$ is confined to positive
content formulas: the complement of a hypothesis is not itself an
explanatory structure, and judgments are not content. Judgments do
not nest ($P(P\chi)$ is not well formed) and are not score-bearing:
the scoring semantics of Section~\ref{sec:scoring-semantics} is
defined on content formulas only, while judgments are evaluated by
threshold satisfaction conditions over content scores. Nested or
score-bearing judgments are excluded from the minimal system, not
because they are incoherent, but because admitting them would
require an additional layer of semantics that the present kernel
does not need.

Thus, hypothesis terms are inductively generated from atomic hypotheses;
therefore, for every term $t$, $\mathrm{At}(t)\neq\varnothing$.
The framework remains neutral with respect to the internal
representation of atomic hypotheses. Elements of $\mathsf{H}$ may
correspond to natural-language explanatory statements, symbolic
propositions, or other representational units produced by external
reasoning systems. The logic operates over these hypotheses as abstract explanatory
tokens, requiring only that plausibility weights and interaction
relations be defined for them. In this
sense, the framework follows the spirit of abstract logical systems
in which reasoning operates over explanatory tokens independently
of their internal linguistic structure.

Hypothesis terms, therefore, represent compositional explanatory
structures, while the surrounding logical language allows
their epistemic evaluation through the plausibility and
commitment operators.

The operators $P$ and $C_\tau$ are not modalities in the standard
Kripkean sense, nor graded modalities: they carry no accessibility
semantics and their arguments do not embed. They are best understood
as \emph{threshold judgment operators}---valuation tests expressing
two distinct consequence regimes induced by the scoring semantics:
plausibility consequence and commitment consequence.

\subsection{Abductive Models and States}

The semantics separates the fixed parameters of a reasoning scenario
from its mutable epistemic content. An \emph{abductive model} is a
structure

\[
M = \langle \mathsf{H}, \mathsf{O}, \kappa, \kappa_o, \sigma,
\lambda, \epsilon, \tau, \Pi \rangle
\]

\noindent where:

\begin{itemize}
    \item $\mathsf{H}$, $\mathsf{O}$: hypotheses and observations
    \item $\kappa: \mathsf{H} \times \mathsf{H} \rightarrow [-1,1]$:
    epistemic interaction
    \item $\kappa_o: \mathsf{O} \times \mathsf{H} \rightarrow [-1,1]$:
    evidence compatibility, the channel through which observations
    support or undercut hypotheses (used by the dynamics of
    Section~\ref{sec:derivation} and the analytic mode of
    Section~\ref{sec:analytic})
    \item $\sigma: \mathsf{O} \rightarrow [0,1]$: observation
    reliability
    \item $\lambda \in (0,1]$: interaction-scaling parameter
    \item $\epsilon \in (0,1)$: activation floor
    \item $\tau \in (0,1]$: normative commitment threshold, subject
    to the constraint $\epsilon < \tau$
    \item $\Pi$: a governance policy regulating competition and
    pre-emption among rival conclusions
    (Sections~\ref{sec:derivation} and~\ref{sec:analytic}); in the
    minimal system, $\Pi$ is determined by a margin function $\delta$
    introduced in Section~\ref{subsec:governed-competition}
\end{itemize}

An \emph{abductive state} over $M$ is a pair

\[
S = \langle M, w \rangle,
\qquad
w: \mathsf{H} \rightarrow [0,1],
\]

\noindent where $w$ assigns each hypothesis its current plausibility
weight. States over a common model differ only in their weight
function, which is exactly the shape of the evidence dynamics
studied in Section~\ref{sec:derivation}. Where no confusion can
arise, we abbreviate the state by the components in active use,
writing $S = \langle \mathsf{H}, w, \kappa, \tau, \epsilon \rangle$
for discussions in which the evidential channel and the governance
policy are not in play.

\noindent The constraint $\epsilon < \tau$ guarantees that the
suspension interval $[\epsilon,\tau)$ is nonempty: there is always
epistemic room between activation and commitment.

\noindent Weights represent pre-collapse plausibility rather than
probability. The interaction function $\kappa$ is assumed to be bounded
in the interval $[-1,1]$, where positive values represent constructive
interaction between hypotheses and negative values represent inhibitory
interaction. The sign of $\kappa$ measures \emph{explanatory
compatibility}---whether two hypotheses cohere as parts of one
account---not the causal sign of an influence: a mechanism in which one
factor causally suppresses another may still be explanatorily coherent,
and would carry positive $\kappa$ on this reading. Compatibility so
understood is mutual, and we accordingly assume $\kappa$
\emph{symmetric}: $\kappa(h_1,h_2)=\kappa(h_2,h_1)$. (A directed
variant, in which $\kappa$ encodes asymmetric influence rather than
compatibility, is expressively attractive for causal modelling and is
registered among the refinements of
Section~\ref{sec:future-directions}; the minimal framework does not require it.)
No transitivity assumption is imposed, allowing interaction patterns to
reflect domain-specific epistemic relations. We stipulate
$\kappa(h,h)=0$ for every $h\in\mathsf{H}$: a hypothesis does not
interact with itself.

The null-diagonal convention lifts to composite terms through a
\emph{structural equivalence}. Let $\equiv_\otimes$ be the smallest
congruence on hypothesis terms generated by commutativity,
$t \otimes u \equiv_\otimes u \otimes t$. Since $\kappa$ is symmetric,
all score-relevant quantities are invariant under $\equiv_\otimes$
(Section~\ref{sec:algebraic-semantics}), so $\equiv_\otimes$-equivalent
terms carry the same explanatory content. We stipulate
$\kappa^*(t_1,t_2)=0$ whenever $t_1 \equiv_\otimes t_2$ for the lifted
interaction of Section~\ref{sec:scoring-semantics}: repeating a
hypothesis---atomic or composite, in either order---cannot inflate its
score. The quotient is deliberately \emph{not} taken by associativity:
Section~\ref{sec:algebraic-semantics} proves that $\otimes$ is
non-associative, so differently bracketed terms over the same atoms are
genuinely distinct explanatory structures and may legitimately
interact.

The parameter $\tau$ specifies the commitment boundary separating
hypotheses that remain merely plausible from those that become
decision-eligible. In this sense, $\tau$ regulates the transition from
epistemic activation to normative commitment within an abductive state.
The threshold $\tau$ is a parameter of the model $M$ carried by the
state, not an index on
the language: the operator $C_\tau$ is evaluated against the threshold
carried by $S$, and Threshold Monotonicity (Section~\ref{sec:inference})
is a comparison between two models identical except in their threshold.
No family of operators over a continuum of thresholds is assumed.
Together, the activation floor $\epsilon$, the commitment threshold
$\tau$, and the margin function carried by $\Pi$ constitute the
governance parameters of the model; richer governance policies---%
term-specific thresholds, richer pre-emption
rules among competitors---are refinements deferred to future work.

Conceptually, an abductive state can be viewed as defining an interaction
space among hypotheses shaped by $\kappa$, within which plausibility
evolves until crossing the commitment boundary determined by $\tau$.

The following subsection clarifies this distinction, which is structurally
consequential for the entire framework.

\subsection{Plausibility vs.\ Probability}

Plausibility and probability differ in normative structure,
compositional behaviour, and inferential role. These differences
become consequential precisely in the situations that motivate
the $\kappa$--$\tau$ framework: reasoning under ambiguity, contradiction, and
sustained uncertainty, particularly when decisions carry
\emph{asymmetric risk}---where the cost of premature commitment
vastly exceeds the cost of continued investigation.

\paragraph{An intuitive illustration.}
Consider a patient presenting with an acute syndrome compatible with
two distinct underlying conditions, $H_1$ and $H_2$, whose first-line
treatments are mutually exclusive: administering the therapy appropriate
to one is actively harmful if the other obtains. Early clinical evidence
is ambiguous: some findings point towards $H_1$, others towards $H_2$, and
several are compatible with both.

Under a probabilistic framework, the clinician assigns
$P(H_1) + P(H_2) \le 1$, and as evidence accumulates, one
probability rises while the other falls. The two hypotheses are
locked in a zero-sum competition. If $P(H_1) = 0.55$ and
$P(H_2) = 0.45$, MAP selection---or any symmetric-loss decision
rule---favours $H_1$. But
acting on that assessment alone---administering only the
treatment for $H_1$---could prove fatal if the true condition
is $H_2$.

The risk-managing response, where it is clinically feasible, is to
maintain both hypotheses as simultaneously active, prepare or
administer treatment for both in so far as they are compatible, and
wait for a confirmatory test that will eventually force resolution.
During this interval, the two hypotheses are not in zero-sum
competition. They \emph{interact}: evidence that partially
supports $H_1$ also constrains what $H_2$ would need to look like
if it were true, and vice versa. The clinical team reasons
\emph{across} the hypotheses, not merely \emph{between} them.
The tension between $H_1$ and $H_2$ is not a defect to be
eliminated, but an informative structure that guides resource
allocation---which tests to order, which treatments to prepare,
when to commit. Refusing to collapse prematurely is itself a
\emph{risk-management} response: it hedges against the catastrophic
downside of committing to the wrong condition.

From the perspective of the $\kappa$--$\tau$ logic, this configuration
is exactly a state of \emph{suspended derivation}: both hypotheses
have high plausibility scores $\mathrm{sc}_S(H_i)$, so that
$\vdash_S^p H_1$ and $\vdash_S^p H_2$, but neither crosses the
commitment threshold $\tau$, so collapse derivation
$\vdash_S^c H_i$ does not yet occur. The point of the example is not
its clinical novelty, but the clean separation it exhibits between
a strong inferential state and the absence of commitment.

\paragraph{A contrasting schema: irreversible action before verification.}
The diagnostic scenario illustrates a case in which suspended
derivation is \emph{correctly} maintained. The complementary failure
mode arises when a high plausibility score is treated as if it were
sufficient for commitment. Its schema is common to many
protocol-governed domains: an irreversible step $A$ may be taken, by
protocol, only after a condition $V$ has been \emph{verified}; $V$
has a high prior of holding, and verification costs time. Familiar
instances include destroying the sole physical copy of a record
before the archival copy has been confirmed readable, irreversibly
committing an industrial batch process before the confirmatory assay
of a critical input, or demolishing a structure before the final
clearance inspection. In each instance, a purely probabilistic
assessment supports proceeding: the prior on $V$ is high, delay is
costly, and the base rates are well documented. But the risk profile
is maximally asymmetric---the upside of proceeding early is modest
(time saved), while the downside if $V$ fails is catastrophic and
irrecoverable---and the verification requirement is precisely the
normative safeguard that the protocol encodes against this asymmetry.
In the vocabulary of the $\kappa$--$\tau$ logic, the agent who
proceeds early treats $\vdash_S^p V$ (high plausibility) as though it
were $\vdash_S^c V$ (commit-worthiness), taking the irreversible step
without the evidence having crossed the governance threshold $\tau$.
The contrast with the diagnostic case is structurally exact: there,
suspended derivation is maintained, hypotheses interact, and action
is kept reversible as far as possible---a risk-hedging posture
encoded directly in the logic; here, an irreversible step is taken on
the basis of a merely plausible condition---a risk-accepting posture
that the governance layer is designed to block. The difference lies
in the presence or absence of the derivation/commitment distinction
that the $\kappa$--$\tau$ logic internalizes.

This contrast also delimits the scope of the claim. Probabilistic
reasoning is perfectly adequate—indeed optimal—in low-stakes,
high-volume decision environments such as recommendation systems,
spam filters, or routine risk stratification, where individual
errors are reversible or tolerable and expected-value optimization
over many trials is the appropriate objective. The limitations of
pure probabilism emerge specifically in high-stakes, low-frequency,
irreversibility-sensitive contexts, where the distinction between
“highly likely” and “commit-worthy” becomes operationally
consequential. In the language of risk theory, these are precisely
the environments characterized by \emph{tail risk}---where the
probability of catastrophic outcomes is low but the magnitude
of loss is extreme and where the asymmetry between upside and
downside renders expected-value reasoning insufficient.
In these contexts, normative plausibility, with its
explicit separation of inference from commitment, provides the
appropriate decision-theoretic foundation.

\paragraph{Anticipating an objection.}
From a subjective Bayesian standpoint in the tradition of de Finetti
\cite{definetti1974theory}, probabilities represent coherent betting
rates, and decisions follow by maximizing expected utility
\cite{savage1954foundations,berger1985statistical}. Thus, a natural
response from this perspective is that the verification failure just
schematized could
have been averted within standard Bayesian decision theory by
incorporating risk via an expected utility calculation with an
appropriate loss function, which would have recommended against
the irreversible step before verification, given the catastrophic
cost of $V$ failing. This response is technically correct,
and the present argument does not claim otherwise.

However, two structural observations qualify it. First, the
Bayesian apparatus handles this case through an \emph{external}
decision-theoretic supplement---utility functions, loss
asymmetries, risk aversion parameters---that is not part of
the probabilistic inference itself. The inferential output of
Bayesian reasoning is the posterior distribution; the question
of whether to \emph{act} on that posterior is answered by a
separate normative layer. An agent who computes
$P(V) = 0.97$ has correctly completed the
probabilistic inference. The error occurs in the transition
from inference to action---a transition that probability
theory does not internally regulate. The $\kappa$--$\tau$ logic, by contrast,
\emph{internalizes} this distinction: the threshold $\tau$ is
a parameter of the logic, and the pre-collapse state $\vdash^p$
is a formal inferential output that explicitly represents
justified non-commitment. The logic itself structurally blocks
premature collapse, rather than relying on external normative
discipline to prevent it.

Second, and more fundamentally, expected utility calculations
presuppose that the relevant probabilities and loss magnitudes
are \emph{available}---that the decision-maker can assign
meaningful quantitative values to the likelihood of each
outcome and the cost of each error. In protocol-governed
verification cases, this assumption is arguably met: base rates
for the verified condition are typically well-documented, and the
cost of failure is clearly
catastrophic. But in the more characteristic settings for which
the framework is designed---crisis management under streaming,
heterogeneous, and partially contradictory evidence; evolving
hypothesis spaces in which new explanatory constructs emerge
during the investigation; genuinely novel situations lacking
stable base rates---this assumption breaks down.
Under such conditions, which decision theory recognizes as
\emph{Knightian uncertainty} \cite{knight1921risk}, risk cannot be meaningfully
quantified---Knight's own distinction between ``risk'' (quantifiable)
and ``uncertainty'' (not quantifiable) is precisely at issue. Single-prior Bayesian modelling may then require priors whose
justification becomes contestable under deep uncertainty. The framework's response is structurally different:
it maintains hypotheses in suspension, allows interaction to
generate emergent explanations, and defers commitment until
governance conditions are met---without pretending to quantify
what cannot be quantified. Suspended derivation provides a
principled inferential posture precisely where expected utility
calculations lack reliable inputs.

\paragraph{Imprecise probability and ambiguity aversion.}
A more sophisticated response from within the probabilistic tradition
replaces the single prior with a \emph{set} of priors. Gilboa and
Schmeidler's maxmin expected utility \cite{gilboa1989maxmin} and
Walley's imprecise probabilities \cite{walley1991statistical} both
formalize justified non-commitment by generalizing the credal state
itself, and both were developed precisely in response to Knightian
uncertainty and the Ellsberg-style phenomena that single-prior
Bayesianism cannot accommodate. These frameworks share the present
paper's diagnosis, but they differ from the $\kappa$--$\tau$ logic on
two structural counts. First, they generalize the belief state while
leaving the decision rule external: maxmin expected utility still
selects an act by optimizing against the least favourable prior in
the set, so the transition from credal state to action remains
governed by a decision-theoretic supplement rather than by the
inferential apparatus itself. The $\kappa$--$\tau$ logic instead
internalizes the commitment boundary as a parameter of the epistemic
state, with $\vdash^p$ a formal output representing sustained
non-commitment. Second, credal sets have no counterpart of the
interaction operator: hypotheses within a set of priors are
alternatives over which imprecision is expressed, not explanatory
structures that reinforce or inhibit one another and synthesize into
emergent composites. Imprecise probability widens the
\emph{representation} of uncertainty; it does not make hypothesis
interaction compositional.

\paragraph{Structural contrasts.}

Four properties distinguish the framework's plausibility from probability.

\emph{(i) Normative constraints.}
Probabilities are governed by Kolmogorov's axioms: they must
sum to one over a partition, compose via the law of total
probability, and update through conditionalization. The framework's plausibility
weights carry none of these structural commitments. Weights are
individually assigned in $[0,1]$ and need not sum to one across
hypotheses---indeed, in typical abductive states they do not.
Updating is governed by $\kappa$-modulated evidence projection
rather than by Bayes' rule. This reflects the fact that plausibility
tracks \emph{explanatory strength relative to evidence} rather than
\emph{degree of belief under a coherent partition}.

In the clinical example, this means both $H_1$ and $H_2$ can
simultaneously carry high plausibility without contradiction.
Maintaining both as active is representable as a legitimate
epistemic state, not as an approximation to an underlying
probability that the clinician has failed to resolve.

\emph{(ii) Interaction.}
In probability theory, the composition of events is governed
by set-theoretic overlap: $P(A \cup B) = P(A) + P(B) - P(A \cap B)$.
The ``interaction'' term is purely extensional. Probabilistic models can of course carry interaction terms and
structured dependencies; the point is narrower: ordinary event
composition does not itself provide a $\kappa$-indexed explanatory
synthesis operation. In the present framework, the
synthesis operator introduces exactly that: $\kappa$-dependent
interference---constructive or destructive---as part of the
composition itself.
Two hypotheses can \emph{amplify} each other's plausibility through
positive interaction, or \emph{attenuate} it through negative
interaction. The lattice join is the zero-interaction special case
of synthesis (Theorem~\ref{thm:kappa-free-reduction}): absent
interaction, combining two explanations yields no more than the
stronger of the two, and any gain above that baseline is produced
solely by constructive interaction (the magnitude of the deviation
is bounded in Theorem~\ref{thm:bounded-perturbation}). There is no
additive carry-over term as in probabilistic aggregation.

In the clinical scenario, the mutual exclusivity of $H_1$ and
$H_2$ ($\kappa < 0$) attenuates their synthesis but does not force
premature elimination. Meanwhile, if a third hypothesis were
compatible with one of them ($\kappa > 0$), their combination
could yield an emergent composite explanation. Probability theory
can, of course, \emph{represent} such a composite---by enlarging the
event algebra with a new hypothesis and assigning it a probability;
what it lacks is an operation that \emph{generates} the composite's
epistemic standing from the components' interaction. In the present
framework that generation is the synthesis clause itself.

\emph{(iii) Closure and commitment.}
In standard Bayesian practice, the posterior distribution is the
terminal \emph{inferential} output: once it is obtained, what remains
is decision, delegated to a separate layer that maximizes expected
utility or selects the maximum a posteriori hypothesis. In the
present framework, having a
plausibility distribution is explicitly \emph{not} closure. The
pre-collapse state is a legitimate inferential output.
Commitment requires an additional normative act---crossing the
threshold $\tau$ and, under the governance policy, clearing the
rivalry structure. This decoupling of inference from decision
is not formally represented \emph{within} the Bayesian inferential
apparatus itself, where the distinction between ``having beliefs''
and ``being committed to act'' belongs to the decision-theoretic
supplement. In risk-sensitive domains, the location of this boundary
is consequential: nothing inside the probabilistic formalism marks a
high posterior as insufficient for irreversible action.

In the clinical case, the posterior
$P(H_1) = 0.55$ completes the probabilistic inferential task;
whether it licenses action is a question the inferential formalism
does not itself pose. The $\kappa$--$\tau$ logic
instead represents the clinician's actual epistemic state:
both hypotheses are live, neither has crossed the commitment
threshold, and the system explicitly models \emph{justified
non-commitment} as an inferential outcome rather than as
incomplete reasoning.

\emph{(iv) Emergent gain above the join.}
Under constructive interaction ($\kappa^*>0$), the synthesis operator
produces composite plausibility that exceeds the plausibility
of the strongest individual component---strictly, whenever the join
is not already saturated (this is established formally, with its
exact strictness conditions, in
Theorem~\ref{thm:absorption-dominance}). Probabilistic composition
of events exhibits no analogue: joint occurrence satisfies
$P(A \cap B) \le \min(P(A), P(B))$, and the additive carry-over in
$P(A \cup B)$ is fixed extensionally by overlap, insensitive to any
interaction relation between the hypotheses as explanations. This
gain above the join
is the formal signature of emergent explanation---the ``whole
greater than the parts'' phenomenon that the framework is designed
to capture---and, crucially, it arises only when the components
interact constructively, have nonzero scores, and the join is
unsaturated (Theorem~\ref{thm:absorption-dominance}). The mere accumulation of non-interacting
hypotheses produces no gain above the join, which is precisely the
property a governance logic for risk-sensitive domains requires:
commitment cannot be manufactured by composing weak, unrelated
hypotheses.

\paragraph{Scope of the risk claim.}
A deflationary clarification is in order before proceeding to the
formal development. The $\kappa$--$\tau$ logic is
\emph{parametrically} risk-sensitive: it internalizes the
\emph{locus} at which a domain's risk posture enters
inference---through the threshold $\tau$, the activation floor
$\epsilon$, and the margin function $\delta$ of the governance
policy---but it does not itself contain a theory of risk. The
formalism represents no actions, losses, reversibility measures,
delay costs, or catastrophic-outcome probabilities, and it derives
no threshold values from such quantities. When, in what follows,
crossing $\tau$ is glossed as the point at which acting has become
preferable to waiting, this is the \emph{intended interpretation} of
an externally supplied parameter, not a theorem of the system: the
logic certifies that the governance threshold has been crossed, and
the burden of calibrating that threshold to the stakes---the mapping
from a risk profile $r$ to governance parameters
$\langle \tau_r, \delta_r \rangle$---lies with the deployment
context, a calibration problem registered explicitly among the
future directions (Section~\ref{sec:future-directions}). The stronger
decision-theoretic claim would require augmenting the formalism with
exactly the loss-theoretic apparatus whose absence from graded
inference the present section has been describing.

\paragraph{Summary.}

Probability models \emph{how much one believes each hypothesis}.
Plausibility models \emph{how explanatorily active each hypothesis is}.
The two frameworks are closest when hypotheses do not interact and
commitment is immediate---but even there they do not coincide:
plausibility weights are not normalized across a partition,
connectives are lattice-valued, and negation is complementation, so
the $\kappa$-free, instant-commitment fragment is a graded valuation
semantics, not a probability calculus. The distinctive divergence,
however, appears when hypotheses interact---when the explanatory
work done by one hypothesis changes
what can be learned from another---and when the timing of
commitment is itself a governed decision. The narrower and stronger
form of the contrast is this: the $\kappa$--$\tau$ semantics does not
require normalized weights, and it internalizes an explicit
commitment judgment over interaction-sensitive explanatory
structures. These are the conditions
under which the $\kappa$--$\tau$ logic operates.

\section{Scoring Semantics}
\label{sec:scoring-semantics}

The scoring semantics translates the conceptual distinctions developed
above into a concrete valuation mechanism. Each content formula
receives a score in $[0,1]$ reflecting its current epistemic standing
within an abductive state; judgments, as fixed by the typing of
Section~\ref{sec:language}, are not score-bearing and are evaluated
by the satisfaction conditions below. For atomic hypotheses, scores
coincide with the assigned plausibility weights; for compound content
formulas, scores are computed compositionally. The critical departure
from standard many-valued semantics lies in the synthesis operator
$\otimes$, whose valuation depends not only on the scores of its
operands but also on their epistemic interaction as encoded by
$\kappa^*$. The propositional connectives receive the usual fuzzy
lattice interpretation (min, max, complementation)---the valuation is
De Morgan-style, not Boolean: excluded middle and non-contradiction
do not in general receive classical values---while the judgment
operators $P$ and $C_\tau$ impose threshold conditions that
distinguish epistemic activation from normative commitment.

We define a score function $\mathrm{sc}_S(\cdot)$.

\subsection{Base Valuation}

For atomic hypotheses:

\[
\mathrm{sc}_S(h) = w(h)
\]

For atomic observations:

\[
\mathrm{sc}_S(o) = 1
\qquad (o \in \mathsf{O})
\]

For Boolean connectives:

\[
\mathrm{sc}_S(\varphi \vee \psi) =
\max(\mathrm{sc}_S(\varphi), \mathrm{sc}_S(\psi))
\]

\[
\mathrm{sc}_S(\varphi \wedge \psi) =
\min(\mathrm{sc}_S(\varphi), \mathrm{sc}_S(\psi))
\]

\[
\mathrm{sc}_S(\neg \varphi) =
1 - \mathrm{sc}_S(\varphi)
\]

In the definitions above, atomic observations are assigned the maximal valuation, reflecting their role in representing accepted informational inputs rather than competing explanatory constructs.

\paragraph{On the choice of negation.}
Negation is involutive: $\mathrm{sc}_S(\neg\varphi)=1-\mathrm{sc}_S(\varphi)$.
This is a deliberate design choice and the one point at which the
framework imposes a duality reminiscent of additive valuations. Its
scope is limited: the framework's non-additivity resides in the
\emph{synthesis} operator $\otimes$, which is where competing
explanations actually interact, and not in the treatment of a formula
against its own negation. One consequence is explicit: a hypothesis and
its negation cannot both be plausible once the activation floor exceeds
one half, since $P\varphi$ and $P\neg\varphi$ require
$\mathrm{sc}_S(\varphi)\ge\epsilon$ and $1-\mathrm{sc}_S(\varphi)\ge\epsilon$,
jointly satisfiable only for $\epsilon\le\tfrac12$. The coexistence the
framework is designed to model is coexistence of \emph{distinct}
hypotheses $H_1,H_2$, each with its own weight, not of a hypothesis and
its complement; the latter is correctly excluded above
$\epsilon=\tfrac12$. A paired necessity/possibility valuation that
lifts this restriction is discussed as a refinement in
Section~\ref{sec:future-directions}.
\paragraph{Range truncation.}
To preserve $[0,1]$-valuedness under interaction, we use the standard
clamping operator:
\[
[x]_0^1 := \min(1,\max(0,x)).
\]
\begin{lemma}[Range preservation]
For every content formula $\chi$, $\mathrm{sc}_S(\chi)\in[0,1]$.
\end{lemma}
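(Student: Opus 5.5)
The plan is structural induction on the grammar of content formulas $\chi$. Positive content formulas $\pi$ are included via the clause $\chi ::= \pi$, so the induction runs over the combined syntax, with $\otimes$ confined to the positive fragment. The hypothesis at each stage is that every immediate subformula has score in $[0,1]$.

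\textbf{Base cases.} For an atomic hypothesis $h$, $\mathrm{sc}_S(h)=w(h)\in[0,1]$, because the weight function of an abductive state is typed as $w:\mathsf{H}\to[0,1]$. For an atomic observation $o$, $\mathrm{sc}_S(o)=1\in[0,1]$.

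\textbf{Lattice connectives.} If $a,b\in[0,1]$, then $\max(a,b)\in[0,1]$ and $\min(a,b)\in[0,1]$, since each returns one of its arguments. Negation is also closed: $a\in[0,1]$ gives $1-a\in[0,1]$. This settles $\vee$, $\wedge$ and $\neg$ at both the positive and the general level.

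\textbf{Synthesis (main obstacle).} The case $\pi_1\otimes\pi_2$ is the one that could break range preservation. Its score adds an interaction term built from $\kappa^*$ and $\lambda$ to a baseline, and the raw sum can leave $[0,1]$ when $\kappa^*$ is strongly positive or negative. I do not intend to bound that term. Instead I will use the fact that the synthesis clause is defined with the outer clamp $[\,\cdot\,]_0^1$, which was introduced for exactly this purpose. Then $\mathrm{sc}_S(\pi_1\otimes\pi_2)=[x]_0^1$ for some real $x$, and $\min(1,\max(0,x))\in[0,1]$ for every $x\in\mathbb{R}$.

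The only requirement is that $x$ is a well-defined real number. This holds because the operand scores are reals by the induction hypothesis, $\kappa^*$ takes values in a bounded set (a lift of $\kappa$ with range $[-1,1]$, set to $0$ on $\equiv_\otimes$-equivalent terms), and $\lambda\in(0,1]$. If $\kappa^*$ on composite terms is itself defined recursively, for example as an average or a clamped combination of atomic $\kappa$ values, I will first check by a parallel induction that it stays finite, and ideally within $[-1,1]$. Finiteness is all that the clamping argument needs.

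**Conclusion.** Once every constructor is shown to map $[0,1]$-valued inputs to a $[0,1]$-valued output, structural induction gives $\mathrm{sc}_S(\chi)\in[0,1]$ for every content formula $\chi$. Judgments do not need to be handled, since by the typing they carry no scores.
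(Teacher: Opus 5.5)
Your proof is correct and is essentially the paper's own argument: structural induction, with the base cases given by the typing $w:\mathsf{H}\to[0,1]$ and $\mathrm{sc}_S(o)=1$, the lattice and negation clauses preserving $[0,1]$, and the $\otimes$ clause landing in $[0,1]$ by the outer clamp $[\cdot]_0^1$. Your extra finiteness check is immediate, since $\kappa^*$ is not defined recursively but as a plain average of atomic $\kappa$-values in $[-1,1]$ (or $0$ under the $\equiv_\otimes$ convention); the one remaining gap, which the paper's proof shares, is that the paper never specifies $\mathrm{At}(o)$ for observation atoms occurring under $\otimes$.
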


\begin{proof}
By structural induction on $\chi$. The base cases are immediate.
The propositional clauses preserve $[0,1]$, and the $\otimes$ clause is
$[0,1]$-valued by definition of $[\cdot]_0^1$. Judgments carry no
score, so no further cases arise. \qed
\end{proof}

\subsection{Lifting $\kappa$ to Hypothesis Terms}

Interaction $\kappa$ is elicited at the level of atomic hypotheses:

\[
\kappa : \mathsf{H} \times \mathsf{H} \rightarrow [-1,1]
\]

Since $\otimes$ may generate compound hypothesis terms,
we define lifting via constituent aggregation.

Let $\mathrm{At}(t)$ denote the set of atomic hypotheses
occurring in term $t$:

\[
\mathrm{At}(h)=\{h\}, \qquad
\mathrm{At}(t_1 \otimes t_2)=
\mathrm{At}(t_1)\cup\mathrm{At}(t_2)
\]

We define the lifted interaction $\kappa^*$:

\[
\kappa^*(t_1,t_2)=
\frac{1}{|\mathrm{At}(t_1)||\mathrm{At}(t_2)|}
\sum_{h_i \in \mathrm{At}(t_1)}
\sum_{h_j \in \mathrm{At}(t_2)}
\kappa(h_i,h_j)
\]

The lifted interaction $\kappa^*$, therefore, represents the average
epistemic compatibility between the constituent explanatory
structures underlying the compound terms. In accordance with the
structural-equivalence convention of Section~\ref{sec:language},
$\kappa^*(t_1,t_2)=0$ whenever $t_1 \equiv_\otimes t_2$:
repetition of the same explanatory content, in either order, carries
no interaction. Terms that are not $\equiv_\otimes$-equivalent---%
including differently bracketed terms over the same atoms, which
Section~\ref{sec:algebraic-semantics} shows to be genuinely distinct
explanatory structures---interact through the averaging formula.

\paragraph{Extension to positive content formulas.}
Hypothesis terms are the primary domain of $\otimes$, but the
structural property of Section~\ref{sec:structural-properties} requires
synthesis to interact with disjunction, and the grammar of
Section~\ref{sec:language} accordingly admits $\otimes$ on all
positive content formulas. We therefore extend the atom map
to content formulas by
\[
\mathrm{At}(\neg\chi)=\mathrm{At}(\chi),\qquad
\mathrm{At}(\chi\vee\psi)=\mathrm{At}(\chi\wedge\psi)
=\mathrm{At}(\chi)\cup\mathrm{At}(\psi),
\]
and define $\kappa^*$ on any pair of content formulas through their
atom sets exactly as above, subject to the structural-equivalence
convention (extended to content formulas as the smallest congruence
generated by commutativity of $\otimes$, $\vee$, and $\wedge$). The synthesis clause then applies to any positive
$\chi,\psi$, since it depends only on their scores and on
$\kappa^*(\chi,\psi)$. That $\otimes$ is not applied to judgments,
nor to formulas whose principal connective is negation, is enforced
by the grammar itself: synthesis composes
explanatory content, and the complement of a hypothesis is not itself
an explanatory structure. (The atom map is nonetheless defined through
negation so that atom sets exist for all content formulas.)

\paragraph{Illustrative examples.}

Consider atomic interactions among hypotheses:

\[
\kappa(h_1,h_3)=0.8, \quad
\kappa(h_1,h_4)=0.6, \quad
\kappa(h_2,h_3)=0.4, \quad
\kappa(h_2,h_4)=-0.2
\]

Let:

\[
t_1 = h_1 \otimes h_2, \qquad
t_2 = h_3 \otimes h_4
\]

Then:

\[
\mathrm{At}(t_1)=\{h_1,h_2\}, \quad
\mathrm{At}(t_2)=\{h_3,h_4\}
\]

and

\[
\kappa^*(t_1,t_2)
=
\frac{1}{4}
(0.8 + 0.6 + 0.4 - 0.2)
=
0.4
\]

Despite one negative interaction, the aggregate compatibility
remains positive.

\vspace{0.4em}

Now consider instead:

\[
\kappa(h_1,h_3)=0.2, \quad
\kappa(h_1,h_4)=-0.7, \quad
\kappa(h_2,h_3)=0.1, \quad
\kappa(h_2,h_4)=-0.6
\]

Then:

\[
\kappa^*(t_1,t_2)
=
\frac{1}{4}
(0.2 - 0.7 + 0.1 - 0.6)
=
-0.25
\]

Here, limited positive compatibilities are outweighed by
systematic tension.

\subsection{Synthesis Operator}

For hypothesis terms:

\[
\mathrm{sc}_S(t_1 \otimes t_2) =
\Big[
\max\!\big(\mathrm{sc}_S(t_1),\,\mathrm{sc}_S(t_2)\big)
+
\lambda\, \kappa^*(t_1,t_2)\,
\mathrm{sc}_S(t_1)\mathrm{sc}_S(t_2)
\Big]_0^1
\]

\noindent where $\lambda \in (0,1]$ is a fixed interaction-scaling parameter
controlling the influence of epistemic interaction relative to the baseline
aggregation. The baseline is the lattice join: at zero interaction, synthesis
returns the more plausible component (Theorem~\ref{thm:kappa-free-reduction}),
and emergent gain above it is produced \emph{solely} by constructive
interaction, while destructive interaction attenuates below the join.

More expressive lifting operators are possible and left for future work.

The synthesis operator can be interpreted as a logical abstraction of
the mixing dynamics underlying the $\kappa$--$\tau$ framework,
where explanatory states may interact constructively or destructively
prior to commitment.

\subsection{Satisfaction}

Relative to the activation floor $\epsilon$ carried by $S$:

\[
S \Vdash P\varphi \iff \mathrm{sc}_S(\varphi) \ge \epsilon
\]

\[
S \Vdash C_{\tau}\varphi \iff \mathrm{sc}_S(\varphi) \ge \tau
\]

Commitment is normative rather than epistemic necessity.

\section{Inference Principles}
\label{sec:inference}

\paragraph{No Forced Collapse}

\[
P\varphi \nRightarrow C_{\tau}\varphi
\]

Plausibility does not imply commitment.

\paragraph{Threshold Monotonicity}

If $\tau_2 > \tau_1$:

\[
C_{\tau_2}\varphi \Rightarrow C_{\tau_1}\varphi
\]

\paragraph{Constructive Synthesis}

If $P t_1$, $P t_2$, $\kappa^*(t_1,t_2) \ge 0$:

\[
P(t_1 \otimes t_2)
\]

Negative interactions discourage but do not prevent synthesis.

\begin{proposition}[Constructive synthesis preserves plausibility]
\label{prop:constructive-synthesis}
Let $S$ be an abductive state with activation floor $\epsilon$ and
let $t_1,t_2$ be hypothesis terms. If
$S \Vdash P t_1$, $S \Vdash P t_2$, and $\kappa^*(t_1,t_2)\ge 0$, then
$S \Vdash P(t_1 \otimes t_2)$.
\end{proposition}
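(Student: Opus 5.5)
The argument unfolds the satisfaction clause for $P$ and then shows directly that the synthesis clause can only lift the score of the stronger component, never lower it, when the lifted interaction is non-negative. By the satisfaction conditions, the hypotheses $S \Vdash P t_1$ and $S \Vdash P t_2$ mean
\[
\mathrm{sc}_S(t_1)\ge\epsilon \quad\text{and}\quad \mathrm{sc}_S(t_2)\ge\epsilon .
\]
Since $\epsilon\in(0,1)$, both scores are then strictly positive. By the Range preservation lemma, both scores also lie in $[0,1]$. The goal is to show $\mathrm{sc}_S(t_1\otimes t_2)\ge\epsilon$.

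First I treat the unclamped quantity
\[
x := \max\big(\mathrm{sc}_S(t_1),\mathrm{sc}_S(t_2)\big) + \lambda\,\kappa^*(t_1,t_2)\,\mathrm{sc}_S(t_1)\,\mathrm{sc}_S(t_2).
\]
Here $\lambda>0$, $\kappa^*(t_1,t_2)\ge 0$ by hypothesis, and both scores are nonnegative. So the interaction term is a product of nonnegative factors, hence nonnegative, and
\[
x \ge m := \max\big(\mathrm{sc}_S(t_1),\mathrm{sc}_S(t_2)\big) \ge \epsilon .
\]
Note that if $t_1\equiv_\otimes t_2$, the structural-equivalence convention sets $\kappa^*=0$. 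This case is still covered by the hypothesis $\kappa^*\ge 0$, so no separate case split is required.

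Next I handle the clamp $[\cdot]_0^1$. The maximum with $0$ cannot reduce $x$, because $x\ge\epsilon>0$. The minimum with $1$ yields $\min(1,x)$. Since $\epsilon<\tau\le 1$, we have $1\ge\epsilon$, and together with $x\ge\epsilon$ this gives $\min(1,x)\ge\epsilon$. Hence $\mathrm{sc}_S(t_1\otimes t_2)=[x]_0^1\ge\epsilon$, which by the satisfaction clause is exactly $S\Vdash P(t_1\otimes t_2)$.

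There is no real obstacle here; the only point needing care is the clamping step. The truncation at $1$ must be checked to stay above $\epsilon$, which uses $\epsilon<1$. I would also remark that only one of the two plausibility premises is actually needed, since the bound uses only $m\ge\epsilon$. This is the monotonicity-above-the-join fact that Theorem~\ref{thm:absorption-dominance} later sharpens.
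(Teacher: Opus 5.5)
Your proof is correct and follows the paper's argument: you unfold the satisfaction clause, note that $\lambda\kappa^*(t_1,t_2)\,\mathrm{sc}_S(t_1)\,\mathrm{sc}_S(t_2)\ge 0$, so the unclamped value is at least $\max(\mathrm{sc}_S(t_1),\mathrm{sc}_S(t_2))\ge\epsilon$, and then check that clamping preserves this bound. Your separate treatment of the upper clamp is, if anything, more explicit than the paper's phrase ``clamping cannot decrease a value already in $[0,1]$'', and you are also right that one plausibility premise suffices.
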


\begin{proof}
From $S \Vdash P t_i$ we have $\mathrm{sc}_S(t_i)\ge \epsilon$ for $i=1,2$.
Let $a=\mathrm{sc}_S(t_1)$ and $b=\mathrm{sc}_S(t_2)$. By the synthesis clause,
with $\lambda\ge 0$ and $\kappa^*(t_1,t_2)\ge 0$,
\[
\mathrm{sc}_S(t_1\otimes t_2)
=
\big[\max(a,b)+\lambda\,\kappa^*(t_1,t_2)\,ab\big]_0^1
\ge \max(a,b) \ge \epsilon,
\]
since $\lambda\,\kappa^*(t_1,t_2)\,ab\ge 0$ and clamping cannot decrease a
value already in $[0,1]$. Thus $S \Vdash P(t_1\otimes t_2)$.
\qed
\end{proof}

\section{Basic Structural Property}
\label{sec:structural-properties}

\paragraph{Non-Distributivity (Typical Case)}

In general:

\[
\varphi \otimes (\psi \vee \chi)
\not\equiv
(\varphi \otimes \psi) \vee (\varphi \otimes \chi)
\]

\textit{Intuition.}
Epistemic interaction introduces contextual dependencies
absent in standard distributive logics.

\paragraph{Counterexample with interaction.}

Treat $\varphi,\psi,\chi$ as atomic hypotheses with
$\mathrm{sc}_S(\varphi)=0.6$,
$\mathrm{sc}_S(\psi)=0.5$,
$\mathrm{sc}_S(\chi)=0.4$,
$\lambda=1$, and atomic interactions
$\kappa(\varphi,\psi)=0.5$, $\kappa(\varphi,\chi)=-0.5$.
By the Boolean extension of $\kappa^*$,
$\kappa^*(\varphi,\psi\vee\chi)
=\tfrac12\big(\kappa(\varphi,\psi)+\kappa(\varphi,\chi)\big)
=\tfrac12(0.5-0.5)=0$, and
$\mathrm{sc}_S(\psi\vee\chi)=\max(0.5,0.4)=0.5$.

Then, on the left-hand side,
\[
\mathrm{sc}_S(\varphi \otimes (\psi\vee\chi))
= \max(0.6,0.5) + (0)(0.6)(0.5) = 0.6 .
\]
On the right-hand side,
\[
\mathrm{sc}_S(\varphi\otimes\psi)
= \max(0.6,0.5) + (0.5)(0.6)(0.5) = 0.75,
\]
\[
\mathrm{sc}_S(\varphi\otimes\chi)
= \max(0.6,0.4) + (-0.5)(0.6)(0.4) = 0.48,
\]
\[
\max(0.75,0.48)=0.75 .
\]

Thus:

\[
\mathrm{sc}_S(\varphi \otimes (\psi \vee \chi)) = 0.6
\;<\; 0.75 =
\mathrm{sc}_S((\varphi \otimes \psi)\vee(\varphi \otimes \chi)),
\]

the interaction $\kappa^*$ being applied throughout rather than silently
set to zero.

\section{Derivation and Closure}
\label{sec:derivation}

Standard logical systems---whether classical, intuitionistic, linear,
or modal---are oriented towards \emph{closure}: the semantic target of
derivation is a settled verdict of derivability relative to a theory,
and the belief state implicitly modelled by a derivational apparatus
is a closed one. This is a claim about the intended output of
inference, not about procedure: classical consequence is
undecidable, and proof search may of course fail to terminate. What
these systems do not represent is a \emph{stable but non-committed}
inferential state---a configuration in which derivation has done its
work, the valuation has stabilized, and yet closure is deliberately
withheld.

The $\kappa$--$\tau$ logic introduces a structural distinction between:

\begin{itemize}
    \item \textbf{Derivability}
    \item \textbf{Commitment / Closure}
\end{itemize}

\subsection{Two Modes of Derivation}

Given an abductive state $S$, we distinguish:

\paragraph{Collapse Derivation}

\[
\vdash_S^c \varphi
\quad \text{iff} \quad
\mathrm{sc}_S(\varphi) \ge \tau
\]

Collapse derivations represent commitment-worthy conclusions.
They behave analogously to standard logical derivations:

\begin{itemize}
    \item closing: they issue a settled, commitment-eligible verdict
    \item stable under threshold monotonicity
    \item normatively endorsed
\end{itemize}

\paragraph{Suspended Derivation}

\[
\vdash_S^p \varphi
\quad \text{iff} \quad
\epsilon \le \mathrm{sc}_S(\varphi) < \tau
\]

Suspended derivations capture epistemically justified but
non-committed conclusions.

They represent inferentially valid states in which:

\begin{itemize}
    \item derivation is active but non-closing
    \item hypotheses remain epistemically live
    \item closure is governance-regulated
\end{itemize}

The two notions relate by
\[
\vdash_S^p \varphi
\iff
S\Vdash P\varphi \ \text{and}\ S\not\Vdash C_\tau\varphi,
\qquad\text{equivalently}\qquad
\epsilon \le \mathrm{sc}_S(\varphi) < \tau .
\]
Thus $P\varphi$ records activation regardless of commitment (it may hold
even when $\mathrm{sc}_S(\varphi)\ge\tau$), whereas $\vdash_S^p\varphi$
additionally requires non-commitment. A formula with
$\mathrm{sc}_S(\varphi)\ge\tau$ satisfies both $P\varphi$ and
$C_\tau\varphi$ and is collapse-derivable, $\vdash_S^c\varphi$, not
suspended.

\subsection{Derivation vs Commitment}

Unlike standard logical frameworks, the framework does not identify
derivability with closure.

\[
\vdash_S^p \varphi \;\nRightarrow\; \vdash_S^c \varphi
\]

Thus, plausibility does not imply commitment.

This distinction holds independently of the underlying logical
tradition. In standard systems:

\begin{itemize}
    \item classical logic enforces bivalence
    \item intuitionistic logic enforces proof conditions
    \item linear logic enforces resource constraints
\end{itemize}

but the intended output of derivation remains a closed verdict.

The framework instead models:

\begin{center}
\textbf{Non-closing inference---stable suspension---as a first-class
epistemic condition}
\end{center}

\subsection{Validity and Consequence}
\label{subsec:consequence}

The relations $\vdash_S^p$ and $\vdash_S^c$ are defined per state by the
satisfaction conditions of Section~\ref{sec:scoring-semantics}. They
induce a consequence relation once we quantify over a class of states.
Fix a class $\mathcal{C}_{\epsilon,\tau}$ of admissible
abductive states over models sharing a common activation floor
$\epsilon$ \emph{and} a common commitment threshold $\tau$ (for
instance, all states over models with a fixed $\mathsf{H}$ respecting
the diagonal convention $\kappa(h,h)=0$ and carrying the same
governance parameters). For a set
$\Gamma\cup\{\chi\}$ of \emph{content} formulas, define
\emph{plausibility consequence}
\[
\Gamma \models_{\epsilon} \chi
\quad\text{iff}\quad
\forall S\in\mathcal{C}_{\epsilon,\tau}\colon\
\Big(\forall\gamma\in\Gamma,\ \mathrm{sc}_S(\gamma)\ge\epsilon\Big)
\ \Rightarrow\ \mathrm{sc}_S(\chi)\ge\epsilon,
\]
and \emph{commitment consequence} $\models_{\tau}$ by replacing $\epsilon$
with $\tau$ throughout---well defined precisely because the class
fixes both thresholds: were $\tau$ allowed to vary across the class,
$\models_{\tau}$ would have no determinate meaning. Validity is the
nullary case:
$\models_{\epsilon}\chi$ iff $\mathrm{sc}_S(\chi)\ge\epsilon$ for
every $S\in\mathcal{C}_{\epsilon,\tau}$. Consequence is defined over content formulas
only, in keeping with the typing of Section~\ref{sec:language}:
judgments are the form in which threshold satisfaction is asserted,
not premises or conclusions of the graded consequence relations.
The per-state turnstiles $\vdash_S^p,\vdash_S^c$
are the instances of these relations at a single state; $\models$
quantifies over the class.

This paper supplies the \emph{semantics} of the $\kappa$--$\tau$ logic and
the consequence relation it induces. A Hilbert- or sequent-style
axiomatisation, together with soundness and completeness for the
$\kappa$-free fragment---which, reducing to a bounded lattice with an
internal threshold projection, is expected to be tractable---is developed
in companion work. We flag this division explicitly rather than leaving
the proof-theoretic status implicit: the present contribution is a
semantic kernel and its consequence relation, not a proof calculus.

\subsection{Dynamics of Suspended Derivation}

An evidence stream induces a sequence of states
\[
S_0 \rightarrow S_1 \rightarrow S_2 \rightarrow \dots
\]
that differ only in their weight function. Each update consumes an
observation bundle $E_{t+1}\subseteq\mathsf{O}$. We equip the state with
an \emph{evidence-compatibility map}
\[
\kappa_o : \mathsf{O}\times\mathsf{H} \to [-1,1],
\]
where $\kappa_o(o,h)$ records how far observation $o$ supports ($>0$) or
undercuts ($<0$) hypothesis $h$, together with a reliability weighting
$\sigma:\mathsf{O}\to[0,1]$. This is the channel through which the
maximal valuation $\mathrm{sc}_S(o)=1$ enters the dynamics: an accepted
observation contributes its full reliability-weighted compatibility.

\paragraph{Update rule.}
For each atomic hypothesis $h$ and learning rate $\eta\in(0,1]$,
\[
w_{t+1}(h) \;=\;
\Big[\, w_t(h) + \eta \sum_{o\in E_{t+1}} \sigma(o)\,\kappa_o(o,h) \,\Big]_0^1 .
\]
Scores at $t+1$ are recomputed from $w_{t+1}$ by the (unchanged) scoring
semantics; $\kappa$, $\tau$, and $\lambda$ are held fixed across the
stream.

\paragraph{Reinforcement.}
The bundle $E_{t+1}$ \emph{reinforces} $\varphi$, with hypothesis term
$t_\varphi$, if its aggregate support is nonnegative on every constituent:
\[
\Delta_{t+1}(h) :=
\sum_{o\in E_{t+1}} \sigma(o)\,\kappa_o(o,h) \;\ge\; 0
\qquad \text{for all } h\in\mathrm{At}(t_\varphi).
\]

\begin{definition}[Hereditarily constructive terms]
\label{def:hereditarily-constructive}
A hypothesis term is \emph{hereditarily constructive} in $S$ if it is
atomic, or of the form $t_1 \otimes t_2$ with $t_1,t_2$ hereditarily
constructive and $\kappa^*(t_1,t_2) \ge 0$: every interaction node in
the term, not merely the outermost one, is non-negative.
\end{definition}

\begin{proposition}[Monotonicity under reinforcement]
\label{prop:monotone-update}
Let $\varphi$ have term $t_\varphi$ and suppose $E_{t+1}$ reinforces
$\varphi$. If $t_\varphi$ is hereditarily constructive, then
$\mathrm{sc}_{S_{t+1}}(\varphi)\ge\mathrm{sc}_{S_t}(\varphi)$.
\end{proposition}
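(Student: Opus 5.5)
The plan is a structural induction on the hereditarily constructive term $t_\varphi$. We will prove the stronger claim that $\mathrm{sc}_{S_{t+1}}(u)\ge\mathrm{sc}_{S_t}(u)$ for every subterm $u$ of $t_\varphi$. Two facts make this available. First, every subterm of a hereditarily constructive term is itself hereditarily constructive, immediately from Definition~\ref{def:hereditarily-constructive}. Second, $\mathrm{At}(u)\subseteq\mathrm{At}(t_\varphi)$, so reinforcement gives $\Delta_{t+1}(h)\ge 0$ on every atom of every subterm. Throughout we use that $\kappa^*$, $\lambda$ and the structural-equivalence convention depend only on the model, which is held fixed across the stream. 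Hence $\kappa^*(t_1,t_2)$ is the same number at $t$ and $t+1$; only the weight function changes.

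\emph{Base case.} For atomic $h\in\mathrm{At}(t_\varphi)$ we have $w_{t+1}(h)=[w_t(h)+\eta\Delta_{t+1}(h)]_0^1$. Since $\eta>0$ and $\Delta_{t+1}(h)\ge0$, the argument is at least $w_t(h)\in[0,1]$. The clamp $[\cdot]_0^1$ is monotone nondecreasing and fixes points of $[0,1]$, so $w_{t+1}(h)\ge [w_t(h)]_0^1=w_t(h)$.

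\emph{Inductive step.} Take $u=t_1\otimes t_2$ with $k:=\kappa^*(t_1,t_2)\ge 0$. Write $a=\mathrm{sc}_{S_t}(t_1)$, $b=\mathrm{sc}_{S_t}(t_2)$ and $a',b'$ for the scores at $t+1$. By the induction hypothesis, $a'\ge a$ and $b'\ge b$, and all four lie in $[0,1]$ by the Range preservation lemma. Consider $f(x,y)=\max(x,y)+\lambda k\,xy$ on $[0,1]^2$. Both summands are nondecreasing in each argument: $\max$ trivially, and $\lambda k\,xy$ because $\lambda k\ge0$ and $x,y\ge0$. Hence $f(a',b')\ge f(a,b)$, and monotonicity of the clamp yields $\mathrm{sc}_{S_{t+1}}(u)\ge\mathrm{sc}_{S_t}(u)$. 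If the structural-equivalence convention applies ($t_1\equiv_\otimes t_2$), then $k=0$ and the same argument goes through.

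The step that needs the most care is recognising why hereditary constructiveness is required rather than just a nonnegative outermost node. If some inner node had $k<0$, the term $\lambda k\,xy$ would be decreasing, so raising the component scores could lower that node's score. That drop would then propagate upward through $\max$ into the outer score. The induction must therefore run through every node, and the proof of the proposition amounts to applying it at the root $t_\varphi$. If $\varphi$'s term involves $\vee$ or $\wedge$ over such terms, the same monotonicity holds because $\max$ and $\min$ are monotone. The proposition itself, however, concerns only the hypothesis term $t_\varphi$.
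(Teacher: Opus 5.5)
Your proof is correct and follows the paper's argument: a structural induction showing that the clamped synthesis value $[\max(a,b)+\lambda\kappa^*ab]_0^1$ is nondecreasing in each component score when $\kappa^*\ge 0$, combined with $w_{t+1}(h)\ge w_t(h)$ on every atom of $t_\varphi$. The paper phrases the induction as monotonicity of the score in the weight vector and then instantiates it, whereas you compare the two states directly; this is only a bookkeeping difference, and your explicit treatment of the update clamp and of the state-independence of $\kappa^*$ fills in details the paper leaves implicit.
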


\begin{proof}
We first show, by structural induction, that the score of a
hereditarily constructive term is monotone non-decreasing in every
atomic weight. For atomic $t$ this is immediate. For
$t = t_1 \otimes t_2$ with $\kappa^*(t_1,t_2)\ge 0$, the synthesis
value $[\max(a,b)+\lambda\kappa^*ab]_0^1$ is monotone non-decreasing
in each of $a,b$ (both $\max$ and the product term are, and clamping
preserves monotonicity); by the induction hypothesis
$a=\mathrm{sc}_S(t_1)$ and $b=\mathrm{sc}_S(t_2)$ are themselves
monotone in the atomic weights, and the composition of monotone maps
is monotone. Reinforcement gives $\Delta_{t+1}(h)\ge 0$, hence
$w_{t+1}(h)\ge w_t(h)$ for every $h\in\mathrm{At}(t_\varphi)$, and
the claim follows.
\qed
\end{proof}

\paragraph{Remark (necessity of the restriction).}
Hereditary constructiveness cannot be weakened to non-negativity of
the outermost interaction alone. If an operand is itself a
destructively interacting composite, reinforcing one of its
constituents can \emph{lower} the operand's score---take
$t_1 = h_1 \otimes h_2$ with $\kappa(h_1,h_2)<0$: raising $w(h_1)$
while $h_2$ remains the stronger component increases the magnitude of
the destructive term $\lambda\kappa^*w(h_1)w(h_2)$ without moving the
join, so $\mathrm{sc}(t_1)$ strictly decreases---and this decrease
propagates through an outer synthesis even when the outer interaction
is zero or positive. The failure is not a defect but the intended
semantics of destructive interaction, recorded in the following
remark; the proposition simply delimits the configurations in which
reinforcement is guaranteed to help.

\paragraph{Remark (destructive case).}
If a term contains a node with $\kappa^*<0$, increasing the weight of
one mutually-inhibiting
component can \emph{lower} the relevant synthesis score. This is
intended: it
records that strengthening one of two antagonistic explanations weakens
their joint plausibility. Monotonicity is therefore a property of
reinforcement under hereditarily constructive configuration, not an
unconditional law.

\paragraph{Stabilisation.}
Starting from $\vdash_{S_t}^p\varphi$, reinforcing updates raise
$\mathrm{sc}(\varphi)$ monotonically until one of the following occurs:

\begin{enumerate}
    \item \textbf{Commitment:} $\mathrm{sc}_{S_{t'}}(\varphi)\ge\tau$,
    triggering $\vdash_{S_{t'}}^c\varphi$ (collapse).
    \item \textbf{Disconfirmation:} a later bundle yields $\Delta(h)<0$ for
    some $h\in\mathrm{At}(t_\varphi)$, ending reinforcement.
    \item \textbf{Pre-emption:} a competing term achieves commitment
    first under the governance policy $\Pi$ in force
    (Section~\ref{subsec:governed-competition}), superseding
    continued suspension of its competitors.
    \item \textbf{Fixed point:} the stream is exhausted, or
    $\mathrm{sc}_{S_{t'+1}}(\varphi)=\mathrm{sc}_{S_{t'}}(\varphi)$, leaving
    $\varphi$ in stable suspended derivation.
\end{enumerate}

\paragraph{Interpretation}

Suspended derivations therefore behave as:

\begin{itemize}
    \item inferential fixed points
    \item epistemically stable states
    \item analytically meaningful outputs
\end{itemize}

rather than incomplete reasoning.

\subsection{Governed Competition}
\label{subsec:governed-competition}

Collapse derivability marks a conclusion as commit-worthy:
$\vdash_S^c \varphi$ records that the evidence supporting $\varphi$
has crossed the threshold $\tau$. Commit-worthiness, however, is a
property of a conclusion in isolation, and risk-sensitive commitment
cannot be adjudicated in isolation: a conclusion that barely clears
the threshold while a strongly incompatible rival stands just behind
it has not resolved the rivalry, only outpaced it. The governance
policy $\Pi$ carried by the model therefore regulates the
\emph{commitment event}---the collapse itself---over and above
threshold satisfaction.

\begin{definition}[Rival-sensitive commitment]
\label{def:rival-sensitive}
Let $\delta : (0,1] \times (0,1] \to [0,1)$ be a \emph{margin
function}, non-decreasing in each argument, with
$\delta(\tau,x) \to 0$ as $x \to 0$. A formula $\varphi$ with term
$t_\varphi$ is \emph{committed} in $S$, written
$\mathrm{Commit}_S(\varphi)$, iff
\begin{enumerate}
\item $\vdash_S^c \varphi$, and
\item for every active rival $\psi$---that is, every $\psi$ with term
$t_\psi$ such that $\mathrm{sc}_S(\psi) \ge \epsilon$ and
$\kappa^*(t_\varphi,t_\psi) < 0$---the separation condition holds:
\[
\mathrm{sc}_S(\varphi) - \mathrm{sc}_S(\psi)
\;\ge\;
\delta\big(\tau,\, -\kappa^*(t_\varphi,t_\psi)\big).
\]
\end{enumerate}
\end{definition}

The required margin thus grows with both the stakes, represented by
$\tau$, and the strength of the incompatibility, represented by
$-\kappa^*$: the more severe the rivalry, the wider the separation
demanded before collapse. Compatible or mildly interacting
alternatives ($\kappa^* \ge 0$) impose no margin---coexisting
constructive explanations are candidates for synthesis, not rivals
to be outrun.

The \emph{first-past-$\tau$} rule---commit to the first conclusion
whose score reaches $\tau$---is recovered as the degenerate policy
$\delta \equiv 0$. It is appropriate only in low-rivalry regimes,
and its inadequacy in risk-sensitive ones is easy to exhibit:
with two strongly incompatible conclusions scoring just above and
just below the threshold---separated by less than the noise of any
realistic estimation procedure---first-past-$\tau$ commits to the
leader, and an infinitesimal perturbation or a different
evidence-arrival order would have committed to its rival. Committing
under such conditions is precisely the premature convergence that
governed abduction exists to prevent; under any non-degenerate
margin function, the state instead remains one of suspended
derivation until the rivalry is genuinely resolved---by the leader
pulling away, or by the rival dropping below activation.

Two structural observations follow. First, rival-sensitive
commitment strengthens, and never weakens, the threshold condition:
$\mathrm{Commit}_S(\varphi)$ implies $\vdash_S^c \varphi$, so all
results stated for collapse derivability apply \emph{a fortiori} to
committed conclusions, and the threshold projection $\Theta_\tau$ of
Section~\ref{sec:algebraic-semantics} retains its role as the
semantic counterpart of commit-worthiness, with $\Pi$ acting as a
filter on top of it. Second, the policy preserves No Forced
Collapse in a stronger form: not only does plausibility not imply
commitment, but even commit-worthiness does not---the transition to
action requires clearance of the rivalry structure, so that
$\kappa$, through the margin condition, directly co-governs the
timing of collapse rather than merely shaping scores.

\subsection{Key Structural Consequence}

The framework supports derivational states that stabilize without closing.

\begin{center}
\textbf{The pre-collapse state is a valid inferential outcome}
\end{center}

Closure becomes a normative decision event rather than a
logical necessity.
\section{Algebraic Semantics}
\label{sec:algebraic-semantics}

The scoring semantics introduced earlier admits a natural algebraic
interpretation in which inference is treated as a structured combination
and transformation of graded epistemic valuations rather than as
truth assignment over fixed models.

This perspective aligns the present semantics with many-valued and algebraic
logical traditions rather than with Tarskian satisfaction-based
semantics. Unlike truth-conditional systems, the framework operates over evolving,
graded valuations whose inferential role is governed by interaction
and normative thresholds.

\subsection{Underlying Algebraic Structure}

Let $\mathcal{A} = \langle [0,1], \wedge, \vee, \oplus, \otimes_P \rangle$
be a graded valuation algebra where:

\begin{itemize}
    \item $\wedge = \min$ (lattice meet),
    \item $\vee = \max$ (lattice join),
    \item $\otimes_P(a,b) = ab$ (product operator),
    \item $\oplus(a,b) = a + b - ab$ (probabilistic sum).
\end{itemize}

The structure is to be characterized with care. The reduct
$\langle [0,1], \min, \max, 1-(\cdot) \rangle$ is a De Morgan
(indeed Kleene) algebra; $\otimes_P$ is the product t-norm and
$\oplus$ its dual t-conorm, both extensively studied in
many-valued and fuzzy logical frameworks
\cite{hajek1998metamathematics,klement2000triangular}. The ensemble
is thus a bounded distributive lattice expanded with a
t-norm/t-conorm pair---\emph{not} a product t-norm algebra in the
residuated sense of \cite{hajek1998metamathematics}: logical
conjunction is interpreted by $\min$, not by the product, and no
residuum or implication operation is assumed, none being needed for
the valuation semantics. The distinctive feature of
the present semantics lies not in the base algebraic structure, but in
the interaction-modulated composition induced by $\kappa^*$: synthesis
takes the lattice join $\max$ as its baseline and perturbs it by a
$\kappa^*$-scaled product term.

This places the synthesis operator in the lineage of \emph{compensatory}
aggregation operators, notably Zimmermann and Zysno's $\gamma$-operators
\cite{zimmermann1980latent}, which interpolate between a t-norm
(intersection, no compensation) and a t-conorm (union, full compensation)
through a single compensation parameter. What is new here is twofold: the
compensation coefficient is not a global $\gamma$ but a hypothesis-pair-indexed
epistemic relation $\kappa^*$, and the commitment threshold $\tau$ is
internal to the logic rather than an external decision rule.

\subsection{Scoring as Algebraic Valuation}

Within this framework, the scoring function:

\[
\mathrm{sc}_S : \varphi \rightarrow [0,1]
\]

acts as an algebraic valuation.

Disjunction and conjunction correspond to lattice operations:

\[
\mathrm{sc}_S(\varphi \vee \psi)
=
\max(\mathrm{sc}_S(\varphi), \mathrm{sc}_S(\psi))
\]

\[
\mathrm{sc}_S(\varphi \wedge \psi)
=
\min(\mathrm{sc}_S(\varphi), \mathrm{sc}_S(\psi))
\]

Negation is defined by standard complementation:

\[
\mathrm{sc}_S(\neg \varphi)
=
1 - \mathrm{sc}_S(\varphi)
\]

\subsection{Interaction as Algebraic Operator}

Epistemic interaction $\kappa$ enriches the lattice-join baseline
$\max$ by a $\kappa^*$-scaled product term:

\[
\mathrm{sc}_S(t_1 \otimes t_2)
=
\max\!\big(\mathrm{sc}_S(t_1),\mathrm{sc}_S(t_2)\big)
+
\lambda \kappa^*(t_1,t_2)\,
\mathrm{sc}_S(t_1)\mathrm{sc}_S(t_2)
\]

Thus, synthesis is the lattice join algebraically perturbed by interaction.

Constructive interaction increases composite valuation above the join,
while destructive interaction attenuates it below the join.

\begin{theorem}[Join reduction]
\label{thm:kappa-free-reduction}
If $\kappa^*(t_1,t_2)=0$, the synthesis operator coincides with the
lattice join on scores:
\[
\mathrm{sc}_S(t_1 \otimes t_2)
=
\max\!\big(\mathrm{sc}_S(t_1),\mathrm{sc}_S(t_2)\big)
=
\mathrm{sc}_S(t_1 \vee t_2).
\]
\end{theorem}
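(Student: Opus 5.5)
The plan is to unfold the synthesis clause of Section~\ref{sec:scoring-semantics} and substitute the hypothesis $\kappa^*(t_1,t_2)=0$. Writing $a=\mathrm{sc}_S(t_1)$ and $b=\mathrm{sc}_S(t_2)$, the clause gives
\[
\mathrm{sc}_S(t_1\otimes t_2)=\big[\max(a,b)+\lambda\cdot 0\cdot ab\big]_0^1=\big[\max(a,b)\big]_0^1 .
\]
The interaction term vanishes identically. This holds for every $\lambda\in(0,1]$ and every pair of operand scores, so no case analysis on the signs or magnitudes of $a,b$ is needed.

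The one step to justify explicitly is that clamping is inert here. By the Range preservation lemma, $a,b\in[0,1]$, hence $\max(a,b)\in[0,1]$, and $[x]_0^1=x$ for every $x\in[0,1]$. This gives the first equality $\mathrm{sc}_S(t_1\otimes t_2)=\max(a,b)$.

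For the second equality I invoke the disjunction clause of the base valuation, $\mathrm{sc}_S(t_1\vee t_2)=\max(\mathrm{sc}_S(t_1),\mathrm{sc}_S(t_2))$. This clause is available because hypothesis terms are positive content formulas, so $t_1\vee t_2$ is well formed and scored.

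I expect no real obstacle. The only point needing care is the degenerate case $t_1\equiv_\otimes t_2$, where $\kappa^*=0$ holds by the structural-equivalence stipulation rather than by the averaging formula. The argument is unaffected, since it uses only the value of $\kappa^*$ and never how that value arises. In that case the result also yields idempotence on scores, because $\mathrm{sc}_S(t_1)=\mathrm{sc}_S(t_2)$ for $\equiv_\otimes$-equivalent terms.
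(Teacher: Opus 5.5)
Your proposal is correct and follows the paper's proof: substitute $\kappa^*=0$ into the synthesis clause and observe that the clamp is inert because $\max(a,b)\in[0,1]$. You also spell out that the second equality is the disjunction clause, which the paper leaves implicit, and your remark on the $\equiv_\otimes$ case is a harmless aside anticipating the later idempotence proposition.
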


\begin{proof}
Immediate from the synthesis clause at $\kappa^*=0$; clamping is inert
since $\max(a,b)\in[0,1]$.
\qed
\end{proof}

\noindent\emph{Remark.} Synthesis and disjunction agree numerically only
in the non-interacting case. They remain conceptually distinct: $\vee$ is
defined on all formulas and is interaction-blind, whereas $\otimes$ is the
interaction-bearing operator on hypothesis terms. Emergence is precisely
the gap
\[
\Delta_\otimes
= \big[\max(a,b)+\lambda\kappa^*ab\big]_0^1 - \max(a,b).
\]
When clamping is inactive, $\Delta_\otimes = \lambda\kappa^*ab$; in
general the clamp intervenes, and the exact conditions for a strictly
positive gap---constructive interaction, nonzero component scores,
unsaturated join---are those of
Theorem~\ref{thm:absorption-dominance}.

\begin{theorem}[Monotonicity in interaction]
For fixed $a,b \in [0,1]$ and $\lambda \in (0,1]$, the synthesis valuation
is monotone non-decreasing in $\kappa$.
\end{theorem}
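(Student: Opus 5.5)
The plan is to treat the synthesis valuation, for fixed $a,b\in[0,1]$ and $\lambda\in(0,1]$, as a function of the interaction coefficient alone. Writing $k$ for the value of $\kappa^*(t_1,t_2)$ and letting it range over $[-1,1]$, we set
\[
f(k) = \big[\max(a,b) + \lambda k\,ab\big]_0^1 .
\]
We then show that $f$ is monotone non-decreasing. The argument factors $f$ as the composition of an affine inner map $g(k)=\max(a,b)+\lambda ab\,k$ with the clamping operator $[\cdot]_0^1$, and establishes monotonicity of each factor separately.

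For the inner map, the slope of $g$ is $\lambda ab$. Since $\lambda>0$ and $a,b\ge 0$, this slope is non-negative. Hence $k_1\le k_2$ implies $g(k_1)\le g(k_2)$. The slope is zero exactly when $a=0$ or $b=0$; in that case $f$ is constant, which is still non-decreasing. For the clamp, $[x]_0^1=\min(1,\max(0,x))$ is a composition of $x\mapsto\max(0,x)$ and $y\mapsto\min(1,y)$, both non-decreasing. The composition of non-decreasing maps is non-decreasing, so $f(k_1)\le f(k_2)$. The same reasoning was already used in the proof of Proposition~\ref{prop:monotone-update}, where clamping was noted to preserve monotonicity.

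The main point needing care is the reading of ``monotone in $\kappa$''. The statement fixes the scores $a,b$, so the claim concerns the lifted coefficient $\kappa^*$ entering the synthesis clause, not the atomic $\kappa$ directly. I would add that, for fixed operand terms, $\kappa^*(t_1,t_2)$ is an average of atomic values $\kappa(h_i,h_j)$ with positive coefficients $1/(|\mathrm{At}(t_1)||\mathrm{At}(t_2)|)$. Raising any atomic interaction between constituents of $t_1$ and $t_2$ therefore weakly raises $\kappa^*$, and the result transfers to the atomic level. For this transfer the scores $a,b$ must be held fixed, which is exactly the hypothesis of the statement.

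For nested terms, changing atomic $\kappa$ would also alter the operand scores. That case is outside the statement, and I would flag it rather than prove it. One also has to check the structural-equivalence convention $\kappa^*(t_1,t_2)=0$ for $t_1\equiv_\otimes t_2$. This convention pins $\kappa^*$ to a constant on such pairs, which is trivially monotone, so it creates no exception.

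Finally, I would record the strict version. If $ab>0$ and both $g(k_1)$ and $g(k_2)$ lie in the unclamped region $[0,1]$, then $f$ is strictly increasing there. This matches the strictness conditions later attributed to Theorem~\ref{thm:absorption-dominance}.
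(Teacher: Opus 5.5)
Your proof is correct and follows the paper's argument: the unclamped expression is affine in $\kappa^*$ with slope $\lambda ab\ge 0$, and the clamp $[\cdot]_0^1$ is monotone, so the composition is non-decreasing. Your extra remarks go slightly beyond the paper's proof but are sound: you transfer the result to atomic $\kappa$ via the positive averaging coefficients of $\kappa^*$, note that the structural-equivalence convention fixes $\kappa^*$ at a constant, and give the strict version.
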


\begin{proof}
For the unclamped synthesis expression:
\[
\frac{\partial}{\partial \kappa}
\big(\max(a,b)+\lambda \kappa ab\big)
=
\lambda ab \ge 0,
\]
and clamping, being a monotone map, preserves monotonicity.
\qed
\end{proof}

Beyond monotonicity, interaction effects admit precise quantitative
characterization.

\begin{theorem}[Bounded perturbation]
\label{thm:bounded-perturbation}
For hypothesis terms $t_1,t_2$ with
$a=\mathrm{sc}_S(t_1)$ and $b=\mathrm{sc}_S(t_2)$,
the deviation from the lattice-join baseline is bounded by:
\[
\big|
\mathrm{sc}_S(t_1 \otimes t_2)
-
\max(a,b)
\big|
\le
\lambda |\kappa^*(t_1,t_2)|\, ab.
\]
\end{theorem}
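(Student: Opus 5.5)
The plan is to reduce the claim to an elementary fact about the clamping operator, namely that $[\cdot]_0^1$ is $1$-Lipschitz and fixes every point of $[0,1]$. Write $a=\mathrm{sc}_S(t_1)$, $b=\mathrm{sc}_S(t_2)$, $m=\max(a,b)$ and $\Delta=\lambda\,\kappa^*(t_1,t_2)\,ab$. By the synthesis clause,
\[
\mathrm{sc}_S(t_1\otimes t_2)=[m+\Delta]_0^1 .
\]
By Range preservation, $a,b\in[0,1]$, hence $m\in[0,1]$ and $[m]_0^1=m$. The deviation to be bounded is therefore $\big|[m+\Delta]_0^1-[m]_0^1\big|$.

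The key step is the inequality $\big|[x]_0^1-[y]_0^1\big|\le|x-y|$ for all real $x,y$. I would obtain it from the fact that $x\mapsto\max(0,x)$ and $x\mapsto\min(1,x)$ are each $1$-Lipschitz, since each is a maximum or minimum of $x$ with a constant. Their composite is then $1$-Lipschitz as well. Applying this with $x=m+\Delta$ and $y=m$ gives
\[
\big|\mathrm{sc}_S(t_1\otimes t_2)-m\big|\le|\Delta|=\lambda\,|\kappa^*(t_1,t_2)|\,ab,
\]
where the last equality uses $\lambda>0$ and $a,b\ge 0$.

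As an alternative that avoids the Lipschitz lemma, one can split on the sign of $\kappa^*$. If $\kappa^*\ge0$, then $m\le m+\Delta$, so $m\le[m+\Delta]_0^1\le m+\Delta$, because clamping from above can only lower the value and $m\ge0$. If $\kappa^*<0$, then $m+\Delta\le[m+\Delta]_0^1\le m$ by the symmetric argument. In both cases the clamped value lies between $m$ and $m+\Delta$, and the bound follows.

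No step is a real obstacle. The only point needing care is that clamping is taken into account rather than ignored: one must check that truncation moves the value toward $m$, never past it, and this holds exactly because $m$ already lies in $[0,1]$. I would also note that the inequality is an equality whenever clamping is inactive, consistent with the remark following Theorem~\ref{thm:kappa-free-reduction}.
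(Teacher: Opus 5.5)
Your proof is correct and follows the paper's own route: the paper observes that the unclamped deviation is $\lambda\kappa^*ab$ and that clamping cannot increase deviation, and your observation that $[\cdot]_0^1$ is $1$-Lipschitz and fixes $m=\max(a,b)\in[0,1]$ is exactly that step. Your version just makes explicit the appeal to Range preservation, which the paper's one-line argument leaves implicit.
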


\begin{proof}
The unclamped deviation equals $\lambda \kappa^* ab$.
Taking absolute values yields the bound.
Clamping cannot increase deviation.
\qed
\end{proof}

Figure~\ref{fig:kappa-perturbation-slice} visualizes the $\kappa^*$-perturbation
as a deformation of the lattice-join baseline, and shows how the
normative threshold $\tau$ defines a collapse boundary.

\begin{figure}[t]
\centering
\begin{tikzpicture}
\begin{axis}[
  width=0.92\linewidth,
  height=0.42\linewidth,
  xlabel={$a=\mathrm{sc}_S(t_1)$},
  ylabel={$\mathrm{sc}_S(t_1\otimes t_2)$ (unclamped)},
  xmin=0, xmax=1,
  ymin=0, ymax=1.2,
  domain=0:1,
  samples=200,
 legend cell align=left,
legend columns=2,
legend style={
  font=\small,
  at={(0.5,1.18)},
  anchor=south,
  draw=none,
  fill=none
},
  ticklabel style={font=\small},
  label style={font=\small},
  legend cell align={left},
  grid=both,
  grid style={line width=.2pt, draw=gray!20},
  major grid style={line width=.2pt, draw=gray!30},
]

\def\b{0.6}
\def\lam{1.0}

\addplot+[thick] {max(x,\b)};

\addplot+[thick, dashed] {max(x,\b) + \lam*(0.8)*x*\b};

\addplot+[thick, dotted] {max(x,\b) + \lam*(-0.8)*x*\b};

\def\tau{0.75}
\addplot+[thick] coordinates {(0,\tau) (1,\tau)};

\legend{
{$\kappa^*=0$ (baseline $\max(a,b)$)},
$\kappa^*=+0.8$ (constructive),
$\kappa^*=-0.8$ (destructive),
$\tau=0.75$ (normative threshold)
}

\node[font=\small, anchor=south east] at (axis cs:1,0.02)
{$b=\mathrm{sc}_S(t_2)=0.6,\;\lambda=1$};

\end{axis}
\end{tikzpicture}
\caption{Effect of interaction on synthesis (slice view).  For fixed $b$, the $\kappa^*$-term perturbs the lattice-join baseline $\max(a,b)$ by an amount proportional to $ab$.  The threshold $\tau$ induces a normative boundary for collapse.}
\label{fig:kappa-perturbation-slice}
\end{figure}

\begin{theorem}[Non-associativity in general]
\label{thm:non-associativity}
In the presence of a nonzero interaction, the synthesis operator of the $\kappa$--$\tau$ logic is not associative in general.
\end{theorem}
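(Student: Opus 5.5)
The plan is to prove the statement by exhibiting an explicit counterexample. I will take three atomic hypotheses $h_1,h_2,h_3$ and choose weights and a symmetric, null-diagonal $\kappa$ so that $\mathrm{sc}_S((h_1\otimes h_2)\otimes h_3)\neq \mathrm{sc}_S(h_1\otimes(h_2\otimes h_3))$. The idea is to place all the nonzero interaction on a single pair. The bracketing then decides where that interaction is applied:
\begin{itemize}
\item at full strength, directly between two atoms;
\item or diluted by the averaging in $\kappa^*$ when one of the two atoms is buried inside a composite.
\end{itemize}

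Concretely, I will set $\lambda=1$ and $w(h_1)=w(h_2)=w(h_3)=0.5$. For the interaction I take $\kappa(h_1,h_2)=\kappa(h_2,h_1)=0.8$ and all other off-diagonal values $0$; the diagonal is $0$ by stipulation.

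For the left bracketing, the inner synthesis gives $\mathrm{sc}_S(h_1\otimes h_2)=0.5+0.8\cdot 0.25=0.7$. Next, $\kappa^*(h_1\otimes h_2,h_3)=\tfrac12(\kappa(h_1,h_3)+\kappa(h_2,h_3))=0$. So by Theorem~\ref{thm:kappa-free-reduction} the outer synthesis is the join, and the left side has score $\max(0.7,0.5)=0.7$.

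For the right bracketing, $\kappa(h_2,h_3)=0$ gives $\mathrm{sc}_S(h_2\otimes h_3)=0.5$, again by join reduction. The outer interaction is $\kappa^*(h_1,h_2\otimes h_3)=\tfrac12(0.8+0)=0.4$. Hence the right side has score $0.5+0.4\cdot0.5\cdot0.5=0.6$.

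Since $0.7\neq 0.6$, associativity fails.

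Two side conditions need checking.
\begin{itemize}
\item No clamping intervenes: all unclamped values lie in $[0,1]$, so the synthesis clause applies literally.
\item The structural-equivalence convention never zeroes an interaction: none of the synthesized pairs are $\equiv_\otimes$-equivalent, because they have different atom sets or different shapes. So the averaging formula for $\kappa^*$ applies at every node.
\end{itemize}

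The main obstacle is only this bookkeeping, in particular confirming that the lifted $\kappa^*$ on composite operands is computed by averaging over atom sets rather than being suppressed. Once that is checked, the inequality is immediate. I would close by remarking that the same construction works for any nonzero $\kappa(h_1,h_2)$ with generic weights. The reason is that the full-strength-versus-averaged asymmetry is intrinsic to the lifting, which is what "in the presence of a nonzero interaction" signals.
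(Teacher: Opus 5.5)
Your counterexample is correct ($0.7 \neq 0.6$) and follows the same route as the paper: both refute associativity by a direct three-atom computation in which $\kappa^*$ on composite operands is lifted by averaging (the paper uses weights $0.7, 0.5, 0.6$ with three nonzero interactions and gets $0.8624 \neq 0.847$). Your sparser instance has a single nonzero pair and equal weights, so it isolates the full-strength-versus-diluted mechanism more cleanly and shows that one nonzero interaction already suffices; your explicit checks that clamping and the $\equiv_\otimes$ convention are inert are bookkeeping the paper leaves implicit.
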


\begin{proof}
Treat $t_1,t_2,t_3$ as atomic with
\[
\mathrm{sc}_S(t_1)=0.7,\quad \mathrm{sc}_S(t_2)=0.5,\quad
\mathrm{sc}_S(t_3)=0.6, \quad \lambda=1,
\]
and atomic interactions
\[
\kappa(h_1,h_2)=0.8,\quad
\kappa(h_2,h_3)=-0.6,\quad
\kappa(h_1,h_3)=0.2.
\]
The lifted interaction for a composite is \emph{determined} by the lifting
of Section~\ref{sec:scoring-semantics}, not chosen freely.

\emph{Left bracketing.}
\[
\mathrm{sc}_S(t_1\otimes t_2)
= \max(0.7,0.5)+(0.8)(0.7)(0.5)
= 0.7+0.28 = 0.98,
\]
\[
\kappa^*(t_1\otimes t_2,\,t_3)
= \tfrac12\big(\kappa(h_1,h_3)+\kappa(h_2,h_3)\big)
= \tfrac12(0.2-0.6) = -0.2,
\]
\[
\mathrm{sc}_S((t_1\otimes t_2)\otimes t_3)
= \max(0.98,0.6)+(-0.2)(0.98)(0.6)
= 0.98-0.1176 = 0.8624.
\]

\emph{Right bracketing.}
\[
\mathrm{sc}_S(t_2\otimes t_3)
= \max(0.5,0.6)+(-0.6)(0.5)(0.6)
= 0.6-0.18 = 0.42,
\]
\[
\kappa^*(t_1,\,t_2\otimes t_3)
= \tfrac12\big(\kappa(h_1,h_2)+\kappa(h_1,h_3)\big)
= \tfrac12(0.8+0.2) = 0.5,
\]
\[
\mathrm{sc}_S(t_1\otimes (t_2\otimes t_3))
= \max(0.7,0.42)+(0.5)(0.7)(0.42)
= 0.7+0.147 = 0.847.
\]

Since $0.8624 \neq 0.847$, associativity fails.
\qed
\end{proof}

\begin{theorem}[Semantic commutativity]
\label{thm:commutativity}
Since $\kappa$ is symmetric (Section~\ref{sec:language}), synthesis
is semantically commutative:
\[
\mathrm{sc}_S(t_1\otimes t_2)=\mathrm{sc}_S(t_2\otimes t_1).
\]
\end{theorem}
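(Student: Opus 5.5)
The plan is to unfold the synthesis clause on both sides and check that each ingredient is invariant under swapping the operands. Put $a=\mathrm{sc}_S(t_1)$ and $b=\mathrm{sc}_S(t_2)$. By definition,
\[
\mathrm{sc}_S(t_1\otimes t_2)=\big[\max(a,b)+\lambda\,\kappa^*(t_1,t_2)\,ab\big]_0^1,
\qquad
\mathrm{sc}_S(t_2\otimes t_1)=\big[\max(b,a)+\lambda\,\kappa^*(t_2,t_1)\,ba\big]_0^1 .
\]
Since $\max$ and multiplication on $[0,1]$ are commutative and $\lambda$ is a fixed parameter, the claim reduces to the single identity $\kappa^*(t_1,t_2)=\kappa^*(t_2,t_1)$. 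Once that holds, the two arguments of the clamp coincide, and so do the clamped values.

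I would establish the symmetry of $\kappa^*$ in two cases, matching how $\kappa^*$ was defined.

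\emph{Case 1: $t_1\not\equiv_\otimes t_2$.} Here $\kappa^*$ is given by the averaging formula. The normalising factor $1/(|\mathrm{At}(t_1)||\mathrm{At}(t_2)|)$ is symmetric. The finite double sum can be reindexed by exchanging the order of summation, and each summand satisfies $\kappa(h_i,h_j)=\kappa(h_j,h_i)$ by the symmetry of $\kappa$ assumed in Section~\ref{sec:language}. Hence $\kappa^*(t_1,t_2)=\kappa^*(t_2,t_1)$.

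\emph{Case 2: $t_1\equiv_\otimes t_2$.} Then $t_2\equiv_\otimes t_1$, because $\equiv_\otimes$ is a congruence and therefore symmetric. The structural-equivalence convention sets both values to $0$, so they agree.

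The only step needing care is that the convention is stated in terms of an equivalence relation. Without symmetry of $\equiv_\otimes$, one ordering could fall under the stipulation while the other fell under the averaging formula, and the two values might differ. Symmetry of the congruence settles this. No induction on term structure is needed: the synthesis clause depends only on the scores of the immediate operands and on $\kappa^*$ of the pair. The same argument applies to positive content formulas via the extended atom map, because the atom sets and the extended congruence are defined in the same way.
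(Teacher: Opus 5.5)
Your proposal is correct and follows essentially the same route as the paper, whose proof reduces the claim to symmetry of $\kappa^*$ together with symmetry of the synthesis expression in $a,b$. You add the details the paper leaves implicit: the reindexing of the averaging formula, and the case $t_1\equiv_\otimes t_2$, which is settled by symmetry of the congruence.
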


\begin{proof}
Follows from symmetry of $\kappa^*$ and symmetry of the synthesis
expression in $a,b$.
\qed
\end{proof}

Thus, the framework's context sensitivity resides in compositional structure rather
than argument permutation, reflecting the dependence of synthesis on
epistemic configuration rather than mere ordering.

\begin{proposition}[Idempotence up to structural equivalence]
\label{prop:idempotence-condition}
With the structural-equivalence convention of
Section~\ref{sec:language}, synthesis is idempotent on all hypothesis
terms, $\mathrm{sc}_S(t\otimes t)=\mathrm{sc}_S(t)$, and more
generally, for any terms $t,u$ with $t \equiv_\otimes u$,
\[
\mathrm{sc}_S(t\otimes u)
=\max\big(\mathrm{sc}_S(t),\mathrm{sc}_S(u)\big)
=\mathrm{sc}_S(t),
\]
the last equality because $\equiv_\otimes$-equivalent terms are
equiscoring under symmetric $\kappa$
(Theorem~\ref{thm:commutativity}).
\end{proposition}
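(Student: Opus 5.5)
The plan has two steps. First, show that $\equiv_\otimes$-equivalent terms are equiscoring: if $t\equiv_\otimes u$ then $\mathrm{sc}_S(t)=\mathrm{sc}_S(u)$. Then read off the synthesis clause at $\kappa^*(t,u)=0$, which is exactly the structural-equivalence stipulation of Section~\ref{sec:language}.

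For equiscoring, I will argue by induction on the generation of the congruence $\equiv_\otimes$. That is, I show that the relation ``$t$ and $u$ are equiscoring and have the same atom set'' is itself a congruence containing the commutativity generators, so that it contains $\equiv_\otimes$ by minimality. The steps are as follows.
\begin{enumerate}
\item Equiscoring and equality of atom sets are reflexive, symmetric and transitive, so the relation is an equivalence.
\item For a generator $t_1\otimes t_2 \equiv_\otimes t_2\otimes t_1$, atom sets agree because $\mathrm{At}$ uses set union. Scores agree by Theorem~\ref{thm:commutativity}, which combines symmetry of $\kappa^*$ (inherited from symmetry of $\kappa$ through the averaging formula) with symmetry of $\max(a,b)$ and $ab$.
\item For the congruence step, suppose $t_1\equiv_\otimes u_1$ and $t_2\equiv_\otimes u_2$, and compare $t_1\otimes t_2$ with $u_1\otimes u_2$. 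By the induction hypothesis the operand scores agree and the atom sets agree.
\end{enumerate}

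The delicate point, which I expect to be the main obstacle, is the congruence step. There $\kappa^*(t_1,t_2)$ must equal $\kappa^*(u_1,u_2)$, and the null-diagonal convention makes $\kappa^*$ depend on more than atom sets: it is forced to $0$ whenever the two arguments are $\equiv_\otimes$-equivalent. I will handle this by cases.
\begin{enumerate}
\item If $t_1\equiv_\otimes t_2$, then by transitivity and symmetry of the congruence $u_1\equiv_\otimes u_2$ as well. Both lifted interactions are then $0$ by the stipulation.
\item If $t_1\not\equiv_\otimes t_2$, then likewise $u_1\not\equiv_\otimes u_2$. Both lifted interactions are given by the averaging formula over the same atom sets, so they coincide.
\end{enumerate}
In either case the synthesis clause receives identical inputs, so the scores agree and the congruence step closes. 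It is therefore essential to carry atom-set equality in the induction hypothesis.

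For the main claim, take $t\equiv_\otimes u$. The stipulation gives $\kappa^*(t,u)=0$, so Theorem~\ref{thm:kappa-free-reduction} yields $\mathrm{sc}_S(t\otimes u)=\max(\mathrm{sc}_S(t),\mathrm{sc}_S(u))$. Here clamping is inert because $\max$ already lies in $[0,1]$. Equiscoring then gives $\max(\mathrm{sc}_S(t),\mathrm{sc}_S(u))=\mathrm{sc}_S(t)$. Idempotence $\mathrm{sc}_S(t\otimes t)=\mathrm{sc}_S(t)$ is the special case $u=t$, using reflexivity of $\equiv_\otimes$.
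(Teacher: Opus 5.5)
Your proof is correct and has the same skeleton as the paper's. The null convention gives $\kappa^*(t,u)=0$, Theorem~\ref{thm:kappa-free-reduction} reduces synthesis to the join, and equiscoring of $t$ and $u$ collapses the maximum.

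The difference lies in the equiscoring step. The paper simply cites Theorem~\ref{thm:commutativity}, which as stated covers only a top-level swap, $t_1\otimes t_2$ versus $t_2\otimes t_1$. But $\equiv_\otimes$ is a congruence, so it also identifies nested swaps such as $(h_1\otimes h_2)\otimes h_3$ and $(h_2\otimes h_1)\otimes h_3$. Your induction over the generation of $\equiv_\otimes$ is what actually establishes the invariance the paper takes for granted. It works because you carry atom-set equality and split on whether the operands are themselves $\equiv_\otimes$-equivalent, which is the one way the null convention makes $\kappa^*$ depend on more than atom sets. The paper's route buys brevity; yours buys a complete argument at arbitrary depth.

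One correction to your framing. The relation you show closed under the generating rules must be $\equiv_\otimes$ intersected with ``equiscoring and same atom set''. That is what your congruence step actually uses when it assumes $t_i\equiv_\otimes u_i$. The relation ``equiscoring and same atom set'' on its own is not a congruence. Take $w(h_1)=w(h_3)$ and a common nonzero value of $\kappa$ on all distinct pairs. Then the inequivalent terms $A=(h_1\otimes h_2)\otimes h_3$ and $B=h_1\otimes(h_2\otimes h_3)$ are equiscoring. Yet $\mathrm{sc}_S(A\otimes A)=\mathrm{sc}_S(A)$, while $\kappa^*(A,B)\neq 0$ forces $\mathrm{sc}_S(A\otimes B)\neq\mathrm{sc}_S(A)$ whenever $0<\mathrm{sc}_S(A)<1$. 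Phrased as rule induction on derivations of $t\equiv_\otimes u$, your argument goes through as written.
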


\begin{proof}
By the convention, $\kappa^*(t,u)=0$ whenever $t \equiv_\otimes u$,
so the synthesis clause reduces to the join. With $u=t$ this gives
$\mathrm{sc}_S(t\otimes t)=\max(a,a)=a$; for $t \equiv_\otimes u$,
commutativity invariance gives $\mathrm{sc}_S(u)=\mathrm{sc}_S(t)$.
\qed
\end{proof}

\noindent The granularity of the convention is not innocuous
bookkeeping, and neither of the two more obvious choices would
suffice in its place. Character-for-character syntactic identity is
too fine: terms such as $t = h_1 \otimes h_2$ and
$u = h_2 \otimes h_1$---distinct strings carrying the same
explanatory content, and equal in score under symmetric $\kappa$
(Theorem~\ref{thm:commutativity})---would count as distinct, so
$\kappa^*(t,u)$, computed from their common atom set, would in
general be nonzero: $t \otimes t$ would be score-idempotent while
$t \otimes u$ inflated or attenuated the score, although $t$ and $u$
express the same composite. Identity of atom sets,
$\mathrm{At}(t)=\mathrm{At}(u)$, is conversely too coarse: it would
identify $(h_1 \otimes h_2) \otimes h_3$ with
$h_1 \otimes (h_2 \otimes h_3)$, terms which
Theorem~\ref{thm:non-associativity} shows may differ in score
precisely because their compositional structures differ, and which
therefore cannot be treated as the same explanatory content merely
because they are built from the same atoms. The commutativity
quotient $\equiv_\otimes$ sits exactly between the two: it identifies
what the semantics itself cannot distinguish, and nothing more.
Because the convention nullifies interaction on
$\equiv_\otimes$-equivalent terms and the baseline is the idempotent
join, repetition cannot inflate a score: $\otimes$ behaves as
governed synthesis, not as a conorm that accumulates mass from
repetition.

\begin{theorem}[Interaction dominance]
\label{thm:absorption-dominance}
Let $a=\mathrm{sc}_S(t_1)$, $b=\mathrm{sc}_S(t_2)$, and
$\lambda\in(0,1]$ as carried by the model.
\begin{enumerate}
\item If $\kappa^*(t_1,t_2)\ge 0$ then
$\mathrm{sc}_S(t_1\otimes t_2)\ge\max(a,b)$, with strict inequality
iff $\kappa^*(t_1,t_2)>0$, $ab>0$, and $\max(a,b)<1$.
\item If $\kappa^*(t_1,t_2)<0$ then $\mathrm{sc}_S(t_1\otimes t_2)<\max(a,b)$
whenever $ab>0$.
\end{enumerate}
\end{theorem}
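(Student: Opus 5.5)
The argument is a direct analysis of the synthesis clause, with $a,b\in[0,1]$ by Range preservation. Write $m=\max(a,b)\in[0,1]$, $k=\kappa^*(t_1,t_2)$, and $x=m+\lambda k ab$, so that $\mathrm{sc}_S(t_1\otimes t_2)=[x]_0^1$. Everything reduces to comparing $[x]_0^1$ with $m$ using two facts. The clamp is the identity on $[0,1]$, and it is monotone. Since $m\in[0,1]$, we have $[m]_0^1=m$. The sign of the perturbation $\lambda k ab$ is the sign of $k$ whenever $ab>0$, because $\lambda>0$.

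For part 1, assume $k\ge 0$. Then $x\ge m$, so monotonicity of the clamp gives $[x]_0^1\ge[m]_0^1=m$. This is essentially the argument already used in Proposition~\ref{prop:constructive-synthesis}. For the strictness biconditional I would argue both directions separately.

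For sufficiency, assume $k>0$, $ab>0$ and $m<1$. Then $x>m$. If $x\le 1$ we get $[x]_0^1=x>m$. If $x>1$ we get $[x]_0^1=1>m$. In either case the inequality is strict.

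For necessity, I argue by contraposition. If $k=0$, Theorem~\ref{thm:kappa-free-reduction} gives equality. If $ab=0$, then $x=m$ and the clamp is inert. If $m=1$, then $x\ge 1$, so $[x]_0^1=1=m$. So failure of any one condition forces equality.

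For part 2, assume $k<0$ and $ab>0$. Then $\lambda k ab<0$, so $x<m\le 1$, and hence the upper clamp is inactive. The lower clamp gives $[x]_0^1=\max(0,x)$. If $x\ge 0$, this equals $x<m$. If $x<0$, it equals $0$, and I need $0<m$. That holds because $ab>0$ forces $a,b>0$, hence $m>0$.

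The only delicate step is keeping track of the clamp at the boundaries. These are the saturated case $m=1$ in part 1, which is exactly why the strictness condition needs $\max(a,b)<1$, and the floor case $x<0$ in part 2, which is where $ab>0$ is used a second time to get $m>0$. I would handle both by the explicit case split on whether $x$ exceeds $1$ or falls below $0$, rather than through Theorem~\ref{thm:bounded-perturbation}. That theorem only bounds $|[x]_0^1-m|$ and does not give the direction or the strictness.
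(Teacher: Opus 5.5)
Your proof is correct and takes essentially the same route as the paper: you compare the clamped synthesis value directly with $\max(a,b)$, and in part~1 strictness turns on whether the upper clamp saturates. The only local difference is in part~2: the paper shows the lower clamp is never active via $\lambda|\kappa^*|ab\le ab\le\min(a,b)\le\max(a,b)$, which gives the exact value $\max(a,b)-\lambda|\kappa^*|ab$, whereas your case split on the sign of $x$ with $m>0$ does not need $\lambda\le 1$ or $|\kappa^*|\le 1$ but yields only the strict inequality.
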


\begin{proof}
The unclamped synthesis value is $\max(a,b)+\lambda\kappa^*ab$.
For (1), the perturbation $\lambda\kappa^*ab$ is non-negative, and
the lower clamp is inert, so the clamped value is at least
$\max(a,b)$. Strictness requires the perturbation to be positive
($\kappa^*>0$ and $ab>0$) \emph{and} the upper clamp to leave room:
if $\max(a,b)=1$, the clamped value equals $1=\max(a,b)$ regardless
of the perturbation, so no strict gain occurs---the upper clamp is
\emph{not} inert in the constructive case whenever the untruncated
expression exceeds one. If $\max(a,b)<1$, the clamped value
$\min(1,\max(a,b)+\lambda\kappa^*ab)$ strictly exceeds $\max(a,b)$.
For (2), since $\lambda|\kappa^*|ab \le ab \le \min(a,b) \le \max(a,b)$,
the lower clamp is inert, and the value
$\max(a,b)-\lambda|\kappa^*|ab$ is strictly below $\max(a,b)$ exactly
when $ab>0$. Thus emergent gain above the join occurs if and only if
the components interact constructively and the join is not already
saturated.
\qed
\end{proof}

By contrast, negative interaction may attenuate even strong terms,
capturing destructive interference.

\begin{proposition}[Inhibition and collapse thresholds]
\label{prop:threshold-kappa}
Let $a=\mathrm{sc}_S(t_1)$, $b=\mathrm{sc}_S(t_2)$, $ab>0$.

\[
f(a,b,\kappa)=\max(a,b)+\lambda \kappa ab.
\]

Collapse or inhibition thresholds follow by solving inequalities:

\[
\kappa^* \ge \frac{\tau-\max(a,b)}{\lambda ab}
\]

\[
\kappa^* \le \frac{\epsilon-\max(a,b)}{\lambda ab}.
\]
\end{proposition}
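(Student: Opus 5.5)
The plan is to read the proposition as two threshold characterizations: for fixed component scores $a,b$ with $ab>0$, commitment $S\Vdash C_\tau(t_1\otimes t_2)$ holds iff $\kappa^*(t_1,t_2)\ge(\tau-\max(a,b))/(\lambda ab)$, and inhibition below activation, $S\not\Vdash P(t_1\otimes t_2)$, is governed by the second inequality. First I state precisely the relation between $\mathrm{sc}_S(t_1\otimes t_2)$ and the unclamped $f(a,b,\kappa^*)$. Then I solve each threshold inequality in $f$ for $\kappa^*$ by dividing by $\lambda ab>0$. Both $\lambda>0$ (model constraint) and $ab>0$ (hypothesis) are used, so the direction of the inequality is preserved.

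The key step is to show that the clamp $[\cdot]_0^1$ does not disturb these equivalences. For the collapse side, $\tau\in(0,1]$. If $f\ge\tau$, then $\min(1,f)\ge\tau$ because $\tau\le1$, and the lower clamp is irrelevant since $\tau>0$. Conversely, if $[f]_0^1\ge\tau>0$, then $f\ge[f]_0^1$ whenever $[f]_0^1>0$. Hence $\mathrm{sc}_S(t_1\otimes t_2)\ge\tau\iff f\ge\tau$, and therefore
\[
S\Vdash C_\tau(t_1\otimes t_2)\iff \kappa^*(t_1,t_2)\ge\frac{\tau-\max(a,b)}{\lambda ab}.
\]

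For the inhibition side, I use the argument of Theorem~\ref{thm:absorption-dominance}(2): $\lambda|\kappa^*|ab\le ab\le\max(a,b)$, so $f\ge0$ and the lower clamp is inert. The upper clamp only matters when $f>1>\epsilon$. Thus for $\epsilon<1$, the condition $\mathrm{sc}_S(t_1\otimes t_2)\le\epsilon$ is equivalent to $f\le\epsilon$, which is equivalent to $\kappa^*\le(\epsilon-\max(a,b))/(\lambda ab)$. With strict inequality it gives $S\not\Vdash P(t_1\otimes t_2)$; the boundary case is the activation edge and should be stated as such. I will note the consequence that this threshold is attainable with $\kappa^*\in[-1,0)$ only when $\max(a,b)>\epsilon$ and $\lambda ab\ge\max(a,b)-\epsilon$. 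Otherwise the bound lies outside $[-1,1]$ and no admissible interaction inhibits. The analogous remark holds for collapse.

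The main obstacle is the clamping bookkeeping: checking that truncation never creates or destroys a threshold crossing. The rest is linear algebra in $\kappa^*$. I would also record, via the monotonicity-in-interaction theorem, that both sets of $\kappa^*$ values are intervals, which justifies calling the bounds thresholds.
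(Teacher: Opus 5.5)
Your proposal is correct and follows the paper's own route. The paper's proof is just \emph{direct algebraic solution}: divide $f\ge\tau$ and $f\le\epsilon$ by $\lambda ab>0$. Your additions go beyond what the paper states. You check that the clamp never creates or destroys a threshold crossing, and you note that the non-strict inhibition bound gives only $\mathrm{sc}_S(t_1\otimes t_2)\le\epsilon$, so $S\not\Vdash P(t_1\otimes t_2)$ needs the strict inequality; both are refinements the paper leaves implicit.

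One slip in your side remark: when $\max(a,b)\le\epsilon$, the bound $(\epsilon-\max(a,b))/(\lambda ab)$ is nonnegative and can lie inside $[-1,1]$. For example, $a=b=0.5$, $\epsilon=0.6$, $\lambda=1$ gives $0.4$. In that case inhibition is simply moot, since the synthesis already scores at most $\epsilon$ at $\kappa^*=0$.
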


\begin{proof}
Direct algebraic solution.
\qed
\end{proof}

\subsection{Normative Threshold Projection}

\begin{definition}[Normative threshold projection]
For $\tau\in(0,1]$, define $\Theta_{\tau}:[0,1]\to[0,1]$ by
\[
\Theta_{\tau}(x)=
\begin{cases}
x & \text{if } x<\tau,\\
1 & \text{if } x\ge \tau.
\end{cases}
\]
\end{definition}

The projection $\Theta_{\tau}$ models collapse as a normative
commitment event rather than as logical necessity.

\begin{theorem}[Properties of threshold projection]
$\Theta_{\tau}$ is monotone, idempotent, and antitone in $\tau$.
\end{theorem}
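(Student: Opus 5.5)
The plan is a direct case analysis on where the argument sits relative to the threshold. The only facts needed are that $\Theta_\tau$ is the identity below $\tau$ and constant $1$ at or above it, that $x \le 1$ on $[0,1]$, and that $\tau \le 1$ as stipulated for the model in Section~\ref{sec:language}. I read "antitone in $\tau$" pointwise: for fixed $x\in[0,1]$ and $0<\tau_1<\tau_2\le 1$, we have $\Theta_{\tau_1}(x)\ge\Theta_{\tau_2}(x)$. In words, raising the commitment threshold can only lower, never raise, the projected value. This is the algebraic counterpart of Threshold Monotonicity in Section~\ref{sec:inference}.

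\emph{Monotonicity.} Take $x\le y$ in $[0,1]$ and split into three cases.
\begin{itemize}
\item If $y<\tau$, both values are the identity, so $\Theta_\tau(x)=x\le y=\Theta_\tau(y)$.
\item If $x<\tau\le y$, then $\Theta_\tau(x)=x\le 1=\Theta_\tau(y)$.
\item If $\tau\le x$, then $\tau\le y$ as well, and both values equal $1$.
\end{itemize}
The remaining configuration, $y<\tau\le x$, cannot occur because $x\le y$.

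\emph{Idempotence.} If $x<\tau$, then $\Theta_\tau(x)=x$ is again below $\tau$, so $\Theta_\tau(\Theta_\tau(x))=x$. If $x\ge\tau$, then $\Theta_\tau(x)=1$, and we need $\Theta_\tau(1)=1$. This holds exactly because $1\ge\tau$. This is the one place where the hypothesis $\tau\in(0,1]$ does real work, and it is the step I would flag as the potential obstacle. For $\tau>1$ the map would send $x\ge\tau$ to a value below threshold, and the case would not even arise in $[0,1]$; the point is that the collapsed value $1$ must itself lie in the committed region. So the proof should cite the constraint $\tau\le 1$ explicitly rather than leave it implicit.

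\emph{Antitonicity in $\tau$.} Fix $x$ and $\tau_1<\tau_2$, and split again into three cases.
\begin{itemize}
\item If $x\ge\tau_2$, both projections equal $1$.
\item If $\tau_1\le x<\tau_2$, then $\Theta_{\tau_1}(x)=1\ge x=\Theta_{\tau_2}(x)$, using $x\le 1$.
\item If $x<\tau_1$, both projections equal $x$.
\end{itemize}
In all cases $\Theta_{\tau_1}(x)\ge\Theta_{\tau_2}(x)$. Equivalently, the committed region $\{x : x\ge\tau\}$, on which $\Theta_\tau$ equals $1$, shrinks as $\tau$ grows. I would close by remarking that this mirrors $C_{\tau_2}\varphi\Rightarrow C_{\tau_1}\varphi$ at the level of valuations.
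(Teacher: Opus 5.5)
Your case analysis is correct and is exactly the ``case inspection'' that the paper's one-line proof leaves implicit; you have written out the cases for monotonicity, idempotence, and pointwise antitonicity in $\tau$. One small overstatement: $\tau\le 1$ is not an extra hypothesis doing real work in the idempotence step, because the case $x\ge\tau$ with $x\in[0,1]$ already gives $\tau\le x\le 1$, so $\Theta_\tau(1)=1$ follows automatically.
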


\begin{proof}
Immediate by case inspection. \qed
\end{proof}

\paragraph{Remark.}
Although $\Theta_\tau$ is monotone and idempotent, it is not intended as a
deductive closure operator but as a \emph{normative commitment projection}:
it upgrades scores above $\tau$ to commitment without performing any
truth-theoretic or entailment-based completion.

\paragraph{Connection with derivation.}
Within the derivational framework introduced earlier, the threshold
projection $\Theta_\tau$ provides the semantic counterpart of collapse
derivation. In particular, a formula $\varphi$ becomes collapse-derivable
precisely when its valuation is mapped to commitment:

\[
\vdash_S^c \varphi
\quad \Longleftrightarrow \quad
\Theta_\tau\big(\mathrm{sc}_S(\varphi)\big)=1.
\]

Thus, collapse derivation corresponds to a projection-induced transition
in valuation space rather than to deductive closure. Logical derivation
remains distinct from normative commitment, while $\Theta_\tau$ acts as
the governance-regulated mechanism connecting epistemic dynamics to
decision.

\subsection{Semantics as Transformation}

Under the algebraic interpretation, inference becomes transformation
within valuation space:

\begin{itemize}
    \item hypotheses combine algebraically,
    \item interaction modulates composition,
    \item updates induce valuation transformations.
\end{itemize}

The process-level emphasis therefore shifts from validity alone to
stability and threshold-regulated
selection.

\subsection{Structural Consequence}

The algebraic semantics reinforces the interpretation of abductive reasoning under the $\kappa$--$\tau$ logic
as:

\begin{center}
\textbf{structured evolution within representational spaces}
\end{center}

rather than evaluation over static logical models.

\section{Example: Suspended Derivation in Crisis Reasoning}
\label{sec:example}

To illustrate the inferential dynamics captured by the
$\kappa$--$\tau$ framework, consider a simplified crisis-management
scenario involving the early stages of an epidemic outbreak.
Crisis management is a paradigmatic risk-sensitive domain:
decisions must be taken under streaming, incomplete, and partially
contradictory evidence, the consequences of premature commitment
are severe and often irreversible (misallocated public health
resources, wrongly targeted interventions, unnecessary closures),
and the asymmetry between the cost of continued investigation
and the cost of acting on the wrong model is extreme.

\subsection{Initial Hypotheses}

Suppose domain experts identify several plausible explanatory
hypotheses:

\begin{itemize}
    \item $H_1$: transmission dominated by transport hubs
    \item $H_2$: transmission dominated by schools
    \item $H_3$: transmission dominated by mass events
    \item $H_4$: transmission dominated by workplaces
\end{itemize}

Rather than selecting a single explanation, the system assigns
initial epistemic valuations:

\[
w(H_1)=0.45,\quad
w(H_2)=0.40,\quad
w(H_3)=0.35,\quad
w(H_4)=0.42
\]

No hypothesis exceeds the normative threshold $\tau=0.85$.
The threshold is deliberately high, reflecting the risk profile
of the domain: committing public health resources to a single
transmission model carries severe downside if that model is wrong.

Thus:

\[
\vdash_S^p H_i \quad \text{for all } i
\]

and no collapse derivation occurs.

\subsection{Epistemic Interaction}

Experts specify interaction structure:

\[
\kappa(H_1,H_4)=+0.6,\qquad
\kappa(H_2,H_3)=+0.3,\qquad
\kappa(H_1,H_2)=-0.4,
\]
completed on the remaining pairs by
\[
\kappa(H_4,H_2)=-0.2,\qquad
\kappa(H_1,H_3)=+0.1,\qquad
\kappa(H_4,H_3)=0,
\]
so that the interaction relation is fully specified---as the
rival-sensitive commitment check below will require. The governance
parameters of the scenario are $\tau=0.85$, $\epsilon=0.25$, and the
margin function
\[
\delta(\tau,x) = r\,\tau\,x \quad\text{with } r=0.5,
\]
a concrete instance of Definition~\ref{def:rival-sensitive}:
non-decreasing in both arguments, vanishing when rivalry vanishes.

Constructive interactions allow synthesis. Using $\lambda=1$:

\[
\mathrm{sc}_S(H_1 \otimes H_4)
=
\max(0.45,\,0.42)
+ (0.6)(0.45)(0.42)
=
0.45 + 0.1134
=
0.5634,
\]

strictly above the stronger component yet still well below $\tau$,
yielding an emergent composite explanation:

\[
H_{1,4}: \text{mobility--workplace coupling}
\]

The risk-management significance is immediate: an intervention
strategy targeting transport hubs alone or workplaces alone would
fail to address the coupled mechanism the composite represents.
(The logic contains no intervention utility model, so it does not
establish suboptimality in a decision-theoretic sense; what it
establishes is that neither single-factor account carries the
composite's explanatory warrant.) The composite hypothesis $H_{1,4}$ identifies
the \emph{interaction} between commuting patterns and workplace
density as the operative mechanism, suggesting coordinated
interventions (e.g., staggered schedules, ventilation mandates
at transit-connected workplaces) that neither single-factor
model would generate.

This composite hypothesis becomes epistemically plausible:

\[
\vdash_S^p (H_1 \otimes H_4)
\]

\subsection{Streaming Observations}

As observations arrive (mobility data, localized outbreaks,
wastewater signals), reinforcing bundles raise the weights of the
constituent hypotheses, and the composite score is recomputed at each
step by the scoring semantics with $\kappa(H_1,H_4)=0.6$ and
$\lambda=1$ held fixed:

\[
\begin{array}{c|ccc}
 & w(H_1) & w(H_4) & \mathrm{sc}_{S_t}(H_1\otimes H_4)\\
\hline
S_0 & 0.45 & 0.42 & 0.5634\\
S_1 & 0.52 & 0.48 & 0.6698\\
S_2 & 0.58 & 0.54 & 0.7679
\end{array}
\]

The rise is guaranteed by Proposition~\ref{prop:monotone-update},
since the bundles reinforce both constituents and the term
$H_1\otimes H_4$, with atomic operands and non-negative interaction,
is hereditarily constructive. The derivation remains suspended:

\[
\vdash_{S_t}^p (H_1 \otimes H_4)
\]

reflecting increasing coherence without enforced closure.
During this interval, the system is not idle: suspended
derivation actively informs provisional resource allocation
(preparing interventions consistent with the leading composite
hypothesis while maintaining fallback capacity for alternatives),
precisely the risk-hedging posture that premature collapse
would foreclose.

\subsection{Normative Collapse}

A further reinforcing bundle brings the constituent weights to
$w(H_1)=0.66$ and $w(H_4)=0.61$, whence

\[
\mathrm{sc}_{S_3}(H_1 \otimes H_4)
=
\max(0.66,\,0.61)
+ (0.6)(0.66)(0.61)
=
0.66 + 0.2416
=
0.9016
\ge \tau,
\]

triggering:

\[
\vdash_{S_3}^c (H_1 \otimes H_4)
\]

Under the governance policy, threshold satisfaction must additionally
clear the rivalry structure (Definition~\ref{def:rival-sensitive}),
and the check can now be carried out explicitly. The reinforcing
bundles were neutral towards schools and mass events
($\kappa_o = 0$ on those hypotheses), so at $S_3$ the remaining
weights stand at their initial values, $w(H_2)=0.40$ and
$w(H_3)=0.35$; both exceed $\epsilon=0.25$ and are active. Lifted
interactions with the composite:
\[
\kappa^*(H_1 \otimes H_4,\, H_2)
= \tfrac{1}{2}\big(\kappa(H_1,H_2)+\kappa(H_4,H_2)\big)
= \tfrac{1}{2}(-0.4-0.2) = -0.3,
\]
\[
\kappa^*(H_1 \otimes H_4,\, H_3)
= \tfrac{1}{2}\big(\kappa(H_1,H_3)+\kappa(H_4,H_3)\big)
= \tfrac{1}{2}(+0.1+0) = +0.05.
\]
$H_3$ is active but not a rival ($\kappa^* \ge 0$); $H_2$ is the
sole active rival. The required margin is
\[
\delta(\tau, -\kappa^*) = (0.5)(0.85)(0.3) = 0.1275,
\]
and the actual separation is
\[
\mathrm{sc}_{S_3}(H_1\otimes H_4) - \mathrm{sc}_{S_3}(H_2)
= 0.9016 - 0.40 = 0.5016 \;\ge\; 0.1275 .
\]
The margin condition is satisfied with room to spare, and commitment
proceeds: $\mathrm{Commit}_{S_3}(H_1\otimes H_4)$. Had $H_2$
instead stood at $0.80$---active, rival, within
$0.1275$ of the composite---threshold satisfaction would have held
while commitment was withheld: exactly the wedge between
$\vdash^c$ and $\mathrm{Commit}$ that rival-sensitive governance
introduces.

Collapse corresponds to commitment rather than truth-finality. On the
intended reading, crossing $\tau$ marks the point at which the
domain's governance deems the risk of acting on the composite
hypothesis lower than the risk of continued suspension; as
emphasized in Section~\ref{sec:language}, that assessment is
supplied to the logic through $\tau$ and $\delta$, not derived by
it---the threshold is the formal locus at which it enters.

The transition to commitment is thus induced by valuation dynamics
rather than deductive entailment.

Figure~\ref{fig:collapse-phase-space} illustrates the geometry of
threshold crossing. Interaction modulates the shape of the collapse
region rather than merely shifting scalar valuations: constructive
interaction ($\kappa^*>0$) \emph{lowers} the barrier to commitment,
expanding the collapse region, so that hypotheses that reinforce each
other become commit-worthy sooner; destructive interaction
($\kappa^*<0$) \emph{raises} the barrier, shrinking the collapse
region, so that hypotheses that inhibit each other require
individually stronger evidence before their synthesis warrants
commitment. The geometry thus visualizes how interaction modulates
risk tolerance---the system's willingness to commit depends not only
on how plausible each hypothesis is, but on how the hypotheses relate
to one another. Note that the plot displays threshold crossing
($\vdash^c$) only: full commitment ($\mathrm{Commit}$,
Definition~\ref{def:rival-sensitive}) has additional dimensions---the
scores and interactions of active rivals---that no two-dimensional
phase plot can show.

\begin{figure}[!htb]
\centering
\begin{tikzpicture}
\begin{axis}[
    width=0.85\linewidth,
    height=0.6\linewidth,
    xlabel={$a=\mathrm{sc}_S(t_1)$},
    ylabel={$b=\mathrm{sc}_S(t_2)$},
    xmin=0, xmax=1,
    ymin=0, ymax=1,
    samples=300,
    domain=0:0.999,
    restrict y to domain=0:1,
    unbounded coords=discard,
    grid=both,
    grid style={draw=gray!20},
    major grid style={draw=gray!35},
    legend style={
        at={(0.5,1.08)},
        anchor=south,
        draw=none,
        fill=none
    },
    legend columns=3,
]

\def\tau{0.85}

\addplot[name path=top, draw=none, forget plot] {1};

\addplot[name path=k0, thick]
{\tau/(1 + 0*x)};

\addplot[name path=kp, thick, dashed]
{\tau/(1 + 0.6*x)};

\addplot[name path=kn, thick, dotted]
{\tau/(1 - 0.6*x)};

\addplot[
    fill=gray!15,
    draw=none
] fill between[of=top and k0];

\legend{
$\kappa^*=0$,
$\kappa^*=+0.6$,
$\kappa^*=-0.6$
}

\node[font=\small] at (axis cs:0.22,0.18) {Suspended region};
\node[font=\small] at (axis cs:0.67,0.92) {Collapse region};

\end{axis}
\end{tikzpicture}
\caption{Geometry of threshold crossing (individual
commit-worthiness, $\vdash^c$) in the $(a,b)$ plane. Curves show the
boundary $\max(a,b)+\lambda\kappa^*ab=\tau$ for $\lambda=1$,
$\tau=0.85$, along the upper branch $b\ge a$ (the lower branch
follows by symmetry): suspended derivation below each curve,
threshold crossing above it. The shaded band is the crossing region
for the baseline $\kappa^*=0$; constructive interaction (dashed)
lowers the boundary, destructive interaction (dotted) raises it.
Full commitment ($\mathrm{Commit}$) has additional rival-dependent
dimensions not displayed here; see the text.}
\label{fig:collapse-phase-space}
\end{figure}

\subsection{Interpretation}

The example illustrates how the $\kappa$--$\tau$ framework
organizes inference in a risk-sensitive domain. Four structural
features are operative:

\begin{itemize}
    \item \emph{Coexistence of competing hypotheses} prevents
    premature resource commitment to a single transmission model,
    maintaining the decision-maker's option value---the ability
    to redirect interventions as evidence accumulates.
    \item \emph{Interaction-driven synthesis} generates composite
    explanations that single-hypothesis frameworks cannot represent,
    enabling intervention strategies calibrated to multi-factor
    causal structures rather than isolated variables.
    \item \emph{Suspended derivation under uncertainty} provides a
    formally sanctioned epistemic posture between ignorance and
    commitment---the system has strong inferential content (plausible
    composite hypotheses with quantified interaction) without
    the premature closure that would expose the decision-maker
    to catastrophic downside risk.
    \item \emph{Threshold-regulated closure} ensures that
    commitment is a governed transition whose timing reflects
    the risk profile of the domain: high-stakes, irreversibility-sensitive
    contexts warrant higher $\tau$, deferring collapse until
    evidence strength matches the severity of potential error.
\end{itemize}

The progression from initial hypotheses through synthesis to
eventual collapse thus models not merely an epistemic trajectory
but a \emph{risk-management} trajectory, in which the logic
itself encodes the normative discipline of deferring irreversible
decisions until the evidence warrants them. Inference progresses
through structured valuation evolution rather than eliminative
selection.

The scenario is resumed in Section~\ref{subsec:analytic-example},
where the same interaction structure is run in the opposite,
analytic direction: from the observed outbreak pattern to the
latent factor decomposition that accounts for it.

\section{The Analytic Mode: Governed Causal Decomposition}
\label{sec:analytic}

The logic has so far been developed in its \emph{synthetic} mode:
atomic hypotheses are composed upward into emergent explanations via
$\otimes$, and governance regulates when a composite becomes
commit-worthy. Abduction, however, also runs in the opposite
direction. Confronted with a complex observed state of affairs, the
reasoner asks which latent factors, in which combination and under
which internal interaction structure, account for the observed
complexity. This is the \emph{analytic} mode, developed
computationally and methodologically in companion work
\cite{pareschi2026analytic}; the present section isolates its logical
kernel---in a deliberately \emph{relationally abstracted} form, made
precise below---and shows that it is expressible within the semantic
apparatus already in place, requiring few new primitives beyond a
structured reading of the explanandum. The two modes share the
interaction relation $\kappa$ and the full governance apparatus
($\epsilon$, $\tau$, $\Pi$); they instantiate, as
Section~\ref{subsec:analytic-governance} makes precise, a common
$\kappa$--$\tau$ governance \emph{schema} over two different
valuation bases.

The analytic direction inherits the framework's central structural
advantage. Factor decomposition under uncertainty is precisely a
domain where premature commitment is dangerous: adopting the wrong
causal decomposition does not merely produce a false explanation but
directs intervention resources towards the wrong targets. The
$\tau$-governed deferral of commitment thus functions as a safeguard
against \emph{causal misattribution risk}---a risk the worked example
below renders formally visible.

\subsection{Structured Explananda and Causal Clusters}

In the synthetic mode, observations enter individually, each with
maximal valuation. In the analytic mode, the object of explanation is
not a single observation but a structured complex of observed
aspects, weighted by salience.

\begin{definition}[Structured explanandum]
\label{def:explanandum}
A \emph{structured explanandum} over an abductive state $S$ is a pair
$E = \langle A, s \rangle$ where $A \subseteq \mathsf{O}$ is a finite
nonempty set of observed \emph{aspects} and $s : A \to (0,1]$ assigns
each aspect a salience weight.
\end{definition}

Aspects are accepted observations in the sense of
Section~\ref{sec:scoring-semantics}---each carries maximal valuation
$\mathrm{sc}_S(e)=1$---but salience records their differential
importance for the decomposition problem: not everything observed
matters equally to what demands explanation.

\paragraph{Scope: a relational abstraction.}
The pair $\langle A, s \rangle$ deliberately abstracts from
structure that the full analytic framework
\cite{pareschi2026analytic} retains: there, explananda carry
temporal, spatial, causal, and operational \emph{relations} among
observed elements, and relational configuration is what allows
familiar factors arranged in a novel observed structure to constitute
a genuinely new explanandum. The present kernel treats aspects
extensionally, as a weighted set, and therefore formalizes the
relationally abstracted fragment of the analytic mode: sufficient for
the governance, interaction, and commitment structure that is this
paper's subject, but not a complete formalization of the companion
framework. Restoring a relational component
$E = \langle A, R, s \rangle$, and with it structural novelty at
the explanandum level, is registered among the future directions
(Section~\ref{sec:future-directions}).

The candidate explanatory material is a distinguished set of
\emph{latent factors} $\mathsf{F} \subseteq \mathsf{H}$: hypotheses
read causally, as factors whose operation would account for aspects
of the explanandum. The channel connecting factors to aspects is the
evidence-compatibility map
$\kappa_o : \mathsf{O} \times \mathsf{H} \to [-1,1]$ already
introduced for the dynamics of suspended derivation
(Section~\ref{sec:derivation}), here read in the inverse direction:
$\kappa_o(e,f) > 0$ records how far the operation of factor $f$ would
account for aspect $e$, while $\kappa_o(e,f) < 0$ records that $f$'s
operation would tell \emph{against} the presence of $e$. The same map
thus serves evidence-driven updating in the synthetic mode and factor
projection in the analytic mode.

\begin{definition}[Causal cluster]
\label{def:cluster}
A \emph{causal cluster} over $S$ is a triple
$C = \langle F_C, \rho_C, \kappa_C \rangle$ where
$F_C \subseteq \mathsf{F}$ is a
finite nonempty set of factors, $\rho_C : F_C \to (0,1]$ assigns each
participating factor a \emph{participation weight}, and
$\kappa_C : F_C \times F_C \to [-1,1]$, symmetric with null diagonal
like the global relation it locally overrides, is the
\emph{asserted internal interaction structure} of the decomposition.
When a decomposition does not assert a deviant organization,
$\kappa_C$ defaults to the restriction $\kappa{\restriction}_{F_C
\times F_C}$ of the global interaction relation, which functions as
a shared prior. (Symmetry of $\kappa_C$ carries the compatibility
reading of Section~\ref{sec:language} into the analytic mode: the
sign of $\kappa_C$ records whether two factors cohere as parts of
one causal account, not the direction of an influence, so a
mechanism in which one factor causally inhibits another may still
carry positive $\kappa_C$; a directed variant is deferred with the
directed global relation.)
\end{definition}

The asserted $\kappa_C$ is an independently supplied or elicited
component of the candidate decomposition---domain knowledge,
mechanistic modelling, expert judgment---not a free parameter to be
optimized against the decomposition score: a candidate that awarded
itself a coherence bonus by asserting strongly positive internal
interaction would be gaming the valuation, not proposing a
hypothesis. Validating an asserted interaction structure against
relational evidence requires the richer explanandum
$E = \langle A, R, s \rangle$ deferred to
Section~\ref{sec:future-directions}; within the relationally
abstracted kernel, the discipline on $\kappa_C$ is provenance, not
semantics.

A causal cluster is thus a structured object recording which latent
factors participate, with what weights, and under what asserted
internal organization---the analytic counterpart of a hypothesis
term. Where a term $t$ carries its explanatory content in its
compositional syntax, a cluster carries it in its weighted, organized
factor profile.

Two typing distinctions deserve emphasis. First, participation
weights are not state weights: the global $w(f)$ records the
epistemic activation of $f$ \emph{as a hypothesis} in the state,
whereas $\rho_C(f)$ records the strength of $f$'s asserted
participation \emph{in this particular decomposition}; the same
factor may carry different participation weights in rival clusters,
and a cluster's profile is part of the decomposition proposal, not of
the state. Second, because $\kappa_C$ belongs to the cluster, two
clusters may share the same factor set and the same participation
profile and yet differ in $\kappa_C$, positing different causal
organizations of the same material; they then receive different
decomposition scores through the coherence term defined below. This
is the formal basis of \emph{structural} (as opposed to material)
novelty in decomposition---the same familiar factors claimed to
interact in a new way---a distinction central to the companion
framework.

\subsection{Decomposition Scoring}

The valuation of a cluster against an explanandum has two components:
how well the weighted factors \emph{cover} the salient aspects, and
how \emph{coherent} the cluster is internally.

\begin{definition}[Coverage and fit]
\label{def:fit}
For a cluster $C=\langle F_C,\rho_C,\kappa_C\rangle$ and aspect
$e \in A$, the
\emph{coverage} of $e$ by $C$ is
\[
\mathrm{cov}_C(e) =
\Big[\, \sum_{f \in F_C} \rho_C(f)\,\kappa_o(e,f) \,\Big]_0^1 ,
\]
and the \emph{fit} of $C$ against $E=\langle A,s\rangle$ is the
salience-weighted mean coverage
\[
\mathrm{fit}_S(C \triangleright E)
=
\frac{\sum_{e \in A} s(e)\,\mathrm{cov}_C(e)}
     {\sum_{e \in A} s(e)} .
\]
\end{definition}

Negative compatibilities subtract inside the clamp: a factor whose
operation would tell against an observed aspect erodes the cluster's
coverage of that aspect. This is the formal locus of disconfirmation
in the analytic mode.

\begin{definition}[Internal interaction mass]
\label{def:iota}
For a cluster $C=\langle F_C,\rho_C,\kappa_C\rangle$ with
$|F_C|\ge 2$, the
\emph{internal interaction mass} is
\[
\iota_S(C)
=
\frac{1}{N_C}
\sum_{\{f,g\} \subseteq F_C,\, f \neq g}
\rho_C(f)\,\rho_C(g)\,\kappa_C(f,g),
\]
where $N_C = \binom{|F_C|}{2}$ ranges over unordered pairs of distinct
factors---well defined because $\kappa_C$ is symmetric, so each
unordered pair carries a unique value. For singleton clusters,
$\iota_S(C)=0$, consistently with the
null diagonal of $\kappa_C$.
\end{definition}

\begin{definition}[Decomposition score]
\label{def:decomposition-score}
The \emph{decomposition score} of $C$ against $E$ is
\[
\mathrm{sc}_S(C \triangleright E)
=
\Big[\,
\mathrm{fit}_S(C \triangleright E)
+
\lambda\, \iota_S(C)
\,\Big]_0^1 ,
\]
with $\lambda \in (0,1]$ the interaction-scaling parameter of the
state.
\end{definition}

The design principle is the same as for synthesis: a baseline
valuation perturbed by $\lambda$-scaled interaction. In the synthetic
clause, the baseline is the lattice join $\max(a,b)$ and the
perturbation is $\lambda\kappa^*ab$; in the analytic clause, the
baseline is the fit and the perturbation is the internal interaction
mass. Internally coherent clusters---whose factors reinforce one
another---score above their raw fit, capturing the intuition that a
decomposition into mutually supporting factors is explanatorily
stronger than the sum of its coverages; internally conflicted
clusters are attenuated below their fit. The correspondence is exact
in the two-factor case:

\begin{proposition}[Structural correspondence of the two modes]
\label{prop:mode-correspondence}
For a two-factor cluster
$C = \langle \{f,g\}, \rho_C, \kappa_C \rangle$ with
$\rho_C(f)=a$, $\rho_C(g)=b$,
\[
\mathrm{sc}_S(C \triangleright E)
=
\big[\, \mathrm{fit}_S(C \triangleright E) + \lambda\,\kappa_C(f,g)\,ab \,\big]_0^1 ,
\]
which has the shape of the synthesis clause of
Section~\ref{sec:scoring-semantics} with the lattice-join baseline
replaced by the fit baseline and component scores replaced by
participation weights.
\end{proposition}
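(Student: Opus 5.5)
The argument will be a direct unfolding of Definitions~\ref{def:iota} and~\ref{def:decomposition-score} in the special case $|F_C|=2$. There is no induction and no case analysis beyond confirming that the two-factor case falls under the non-singleton branch of the definition.

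\textbf{Step 1: compute the internal interaction mass.} Take $C=\langle\{f,g\},\rho_C,\kappa_C\rangle$ with $f\neq g$. Then $N_C=\binom{2}{2}=1$, and the sum over unordered pairs of distinct factors has exactly one summand, the pair $\{f,g\}$. Hence
\[
\iota_S(C)=\rho_C(f)\,\rho_C(g)\,\kappa_C(f,g)=ab\,\kappa_C(f,g).
\]
The only point needing care is that this summand is well defined, i.e.\ independent of whether it is written as $\kappa_C(f,g)$ or $\kappa_C(g,f)$. This follows from the symmetry of $\kappa_C$ required in Definition~\ref{def:cluster}, which the definition of $\iota_S$ already invokes.

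\textbf{Step 2: substitute into the decomposition score.} Putting this value into Definition~\ref{def:decomposition-score} gives
\[
\mathrm{sc}_S(C\triangleright E)=\big[\mathrm{fit}_S(C\triangleright E)+\lambda\,\kappa_C(f,g)\,ab\big]_0^1,
\]
which is the displayed identity. The fit term is carried along unchanged; its internal structure (coverage, salience weighting) plays no role here.

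\textbf{Step 3: verify the structural comparison.} This is a matching of terms against the synthesis clause of Section~\ref{sec:scoring-semantics}:
\begin{itemize}
\item the join $\max(a,b)$ corresponds to $\mathrm{fit}_S$;
\item the component scores $\mathrm{sc}_S(t_i)$ correspond to the participation weights $\rho_C$;
\item the lifted interaction $\kappa^*(t_1,t_2)$ corresponds to $\kappa_C(f,g)$;
\item the same $\lambda$ and the same clamp $[\cdot]_0^1$ appear in both.
\end{itemize}
For the last correspondence I would note that for atomic $f,g$ we have $\kappa^*(f,g)=\kappa(f,g)$, since the averaging over $\mathrm{At}(f)\times\mathrm{At}(g)$ is trivial. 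In the default case $\kappa_C=\kappa{\restriction}_{F_C\times F_C}$ the interaction coefficient therefore literally coincides with the one in the synthetic clause.

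\textbf{Main obstacle.} I do not expect any genuine difficulty. The only delicate points are bookkeeping:
\begin{itemize}
\item the normalisation $N_C=1$, which is exactly why the correspondence is stated for the two-factor case only (for larger clusters the averaging dilutes the pairwise term);
\item the requirement $f\neq g$, which is implicit in $F_C$ being a two-element set and guarantees that the singleton clause $\iota_S(C)=0$ does not apply.
\end{itemize}
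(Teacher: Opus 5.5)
Your proposal is correct and matches the paper's proof, which simply unfolds the definitions of $\iota_S$ and of the decomposition score with $N_C=1$, so the single summand gives $\iota_S(C)=ab\,\kappa_C(f,g)$. Your extra remarks add only bookkeeping to the same argument: symmetry of $\kappa_C$ makes that summand well defined, and $\kappa^*(f,g)=\kappa(f,g)$ holds for distinct atoms.
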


\begin{proof}
Immediate from Definitions~\ref{def:iota}
and~\ref{def:decomposition-score} with $N_C=1$.
\qed
\end{proof}

The analytic analogues of the basic algebraic results of
Section~\ref{sec:algebraic-semantics} follow directly.

\begin{proposition}[Fit reduction]
\label{prop:fit-reduction}
If $\kappa_C$ vanishes on $F_C \times F_C$, then
$\mathrm{sc}_S(C \triangleright E) = \mathrm{fit}_S(C \triangleright E)$.
\end{proposition}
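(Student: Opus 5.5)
The plan is to unfold Definitions~\ref{def:iota} and~\ref{def:decomposition-score} and show that the interaction term contributes nothing, after which the outer clamp turns out to be inert.

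\emph{Step 1: the interaction mass vanishes.} I will split on the size of $F_C$.
\begin{itemize}
\item If $|F_C|=1$, then $\iota_S(C)=0$ directly by the singleton convention of Definition~\ref{def:iota}.
\item If $|F_C|\ge 2$, every unordered pair $\{f,g\}$ of distinct factors has $\kappa_C(f,g)=0$ by hypothesis. So each summand $\rho_C(f)\rho_C(g)\kappa_C(f,g)$ is zero, and since $N_C\ge 1$, we again get $\iota_S(C)=0$.
\end{itemize}
Note that the hypothesis only needs $\kappa_C$ to vanish off the diagonal. The diagonal is null anyway, so the statement's condition on all of $F_C\times F_C$ is consistent with that.

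\emph{Step 2: reduce the score.} Substituting $\iota_S(C)=0$ gives
\[
\mathrm{sc}_S(C\triangleright E)=\big[\mathrm{fit}_S(C\triangleright E)\big]_0^1 .
\]

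\emph{Step 3: the clamp is inert.} This is the only point needing a short argument, though it is not a real obstacle. I need $\mathrm{fit}_S(C\triangleright E)\in[0,1]$.
\begin{itemize}
\item By Definition~\ref{def:fit}, each coverage $\mathrm{cov}_C(e)$ is clamped, so it lies in $[0,1]$.
\item The fit is a weighted mean of these coverages with weights $s(e)>0$. The denominator $\sum_{e\in A}s(e)$ is positive because $A$ is nonempty and $s$ takes values in $(0,1]$.
\item A convex combination of values in $[0,1]$ stays in $[0,1]$.
\end{itemize}
Hence $[x]_0^1=x$ for $x=\mathrm{fit}_S(C\triangleright E)$, and the identity follows. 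This mirrors the proof of Theorem~\ref{thm:kappa-free-reduction}, where clamping is inert because $\max(a,b)\in[0,1]$.
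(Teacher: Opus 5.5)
Your proof is correct and follows the paper's own argument: every summand of $\iota_S(C)$ vanishes, and the clamp is inert because $\mathrm{fit}_S(C\triangleright E)\in[0,1]$. You also spell out two points the paper leaves implicit, namely the singleton case and the convex-combination argument showing the fit lies in $[0,1]$.
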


\begin{proof}
All summands of $\iota_S(C)$ vanish, and clamping is inert on
$\mathrm{fit}\in[0,1]$.
\qed
\end{proof}

\begin{proposition}[Bounded coherence perturbation]
\label{prop:bounded-coherence}
\[
\big|\,
\mathrm{sc}_S(C \triangleright E)
-
\mathrm{fit}_S(C \triangleright E)
\,\big|
\;\le\;
\frac{\lambda}{N_C}
\sum_{\{f,g\} \subseteq F_C,\, f\neq g}
\rho_C(f)\,\rho_C(g)\,|\kappa_C(f,g)|
\;\le\; \lambda .
\]
Moreover, for fixed fit and weights, the unclamped decomposition score
is monotone non-decreasing in each $\kappa_C(f,g)$.
\end{proposition}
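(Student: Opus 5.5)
The plan mirrors the proof of Theorem~\ref{thm:bounded-perturbation}, transposed to the fit baseline of Definition~\ref{def:decomposition-score}. Write $F=\mathrm{fit}_S(C\triangleright E)$ and $I=\lambda\,\iota_S(C)$. The first step is to note that $F\in[0,1]$. Each coverage $\mathrm{cov}_C(e)$ is clamped to $[0,1]$, and the salience weights are strictly positive, so $F$ is a convex combination of values in $[0,1]$. The second step uses the fact that the clamp $[\cdot]_0^1$ is $1$-Lipschitz and fixes every point of $[0,1]$. It therefore satisfies
\[
\big|[F+I]_0^1 - F\big| = \big|[F+I]_0^1-[F]_0^1\big| \le |I|,
\]
which is exactly the sense in which ``clamping cannot increase deviation.'' Singleton clusters are settled at once: there $\iota_S(C)=0$, and by convention the sum in the statement is empty and equals $0$, so every side is $0$.

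The third step bounds $|I|$ by the middle expression. This is the triangle inequality applied to the sum defining $\iota_S(C)$ in Definition~\ref{def:iota}, using $\rho_C\ge 0$ and $\lambda>0$:
\[
|I| \le \frac{\lambda}{N_C}\sum_{\{f,g\}}\rho_C(f)\rho_C(g)|\kappa_C(f,g)|.
\]
The fourth step gives the final bound by $\lambda$. Each summand is at most $1$, because $\rho_C$ takes values in $(0,1]$ and $|\kappa_C|\le 1$. The sum runs over exactly $N_C=\binom{|F_C|}{2}$ unordered pairs, so it is at most $N_C$. Dividing by $N_C$ therefore gives at most $1$, and the whole expression is at most $\lambda$.

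For the monotonicity clause, I would observe that with $F$ and $\rho_C$ fixed, the unclamped score $F+\lambda\iota_S(C)$ is affine in each $\kappa_C(f,g)$. Its coefficient is $\lambda\rho_C(f)\rho_C(g)/N_C\ge 0$, where each unordered pair contributes a single term, so the expression is non-decreasing in that variable. As in the monotonicity-in-interaction theorem, the same conclusion then passes to the clamped score because the clamp is monotone.

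I do not expect a genuine obstacle here. The only point requiring care is the Lipschitz/fixed-point property of the clamp in the second step, which depends on $F$ already lying in $[0,1]$. That is why the first step (the convex-combination argument) has to come first.
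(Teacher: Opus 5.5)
Your proof is correct and follows the paper's own argument: the unclamped deviation is $\lambda\,\iota_S(C)$, the triangle inequality together with $\rho_C(f)\rho_C(g)\le 1$ and $|\kappa_C|\le 1$ gives both bounds, clamping cannot increase deviation, and monotonicity comes from the nonnegative coefficient $\lambda\,\rho_C(f)\rho_C(g)/N_C$. Your explicit check that $\mathrm{fit}_S(C\triangleright E)\in[0,1]$ (so the $1$-Lipschitz clamp fixes it) and your remark on singleton clusters (where, with $N_C=0$, the middle expression is meaningful only by convention) make precise what the paper's one-line proof leaves implicit.
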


\begin{proof}
The unclamped deviation equals $\lambda\,\iota_S(C)$; the triangle
inequality and $\rho_C(f)\rho_C(g)\le 1$, $|\kappa_C|\le 1$ give the
bounds,
and clamping cannot increase deviation. Monotonicity follows from
$\partial(\mathrm{fit}+\lambda\iota)/\partial\kappa_C(f,g)
=\lambda\, \rho_C(f)\rho_C(g)/N_C \ge 0$.
\qed
\end{proof}

Fit reduction is the analytic counterpart of Join Reduction
(Theorem~\ref{thm:kappa-free-reduction}): interaction-free
decomposition collapses to pure coverage, and everything the
framework adds over a weighted covering model resides in the
$\kappa$-perturbation.

\paragraph{Remark (accumulation and cluster size).}
The two baselines differ in one behaviour that should be stated
rather than obscured. The synthetic baseline is the idempotent join,
so a composite cannot outscore its best component except through
genuine positive interaction: commitment cannot be manufactured by
pooling weak, unrelated hypotheses
(Section~\ref{sec:algebraic-semantics}). The analytic baseline is a
clamped weighted sum: participation weights are not normalized, no
complexity penalty is imposed, and coverage of an aspect can
therefore be driven towards saturation by accumulating many mildly
compatible factors, even with $\kappa_C \equiv 0$. The analytic
mode thus inherits the accumulation behaviour of weighted covering
models, and the no-manufactured-commitment guarantee is a property
of the synthesis clause, not of coverage. This is a deliberate
modelling choice---set-covering accumulation is often the right
behaviour for causal decomposition, where many small contributing
factors may jointly account for an outcome---but it introduces a
size bias that governance must police: parsimony penalties,
constraints on total participation mass
$\sum_f \rho_C(f)$, or normalized coverage are natural refinements,
registered in Section~\ref{sec:future-directions}.

\subsection{Inter-Cluster Interaction}

Multiple candidate decompositions characteristically coexist under
suspended derivation. A second level of interaction then arises: a
relation measuring whether two proposed decompositions are compatible
or competing.

\begin{definition}[Inter-cluster interaction]
\label{def:kappa-star-star}
For clusters $C_1=\langle F_1,\rho_1,\kappa_{C_1}\rangle$ and
$C_2=\langle F_2,\rho_2,\kappa_{C_2}\rangle$,
\[
\kappa^{**}(C_1,C_2)
=
\frac{\displaystyle
\sum_{f \in F_1} \sum_{g \in F_2}
\rho_1(f)\,\rho_2(g)\,\kappa(f,g)}
{\displaystyle
\Big(\sum_{f \in F_1} \rho_1(f)\Big)
\Big(\sum_{g \in F_2} \rho_2(g)\Big)} ,
\]
with the diagonal convention $\kappa^{**}(C,C)=0$ extended from
Section~\ref{sec:language} (under identity of clusters), and with
$\kappa(f,f)=0$ neutralizing shared factors in the double sum.
\end{definition}

Note that the double sum runs over the \emph{global} interaction
relation $\kappa$, not over the clusters' asserted structures: a
cluster's $\kappa_C$ is its own hypothesis about how its factors are
internally organized, and is scored through $\iota$, whereas the
compatibility of two rival decompositions must be assessed against
the shared prior---otherwise each decomposition could unilaterally
legislate its relation to its competitors.

$\kappa^{**}$ is a weight-sensitive generalization of the term-level
lifting $\kappa^*$; the two levels cohere exactly:

\begin{proposition}[Level coherence]
\label{prop:level-coherence}
Let $C_1, C_2$ be clusters with $F_1 \neq F_2$ and constant
participation profiles $\rho_1, \rho_2$. Then
$\kappa^{**}(C_1,C_2) = \kappa^*(t_1,t_2)$ for any hypothesis terms
$t_1,t_2$ with $\mathrm{At}(t_i)=F_i$.
\end{proposition}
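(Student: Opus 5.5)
The plan is a direct computation: the constant participation weights cancel between the numerator and denominator of Definition~\ref{def:kappa-star-star}, leaving the unweighted average that defines $\kappa^*$. Before computing, I will check that neither side is overridden by its diagonal convention, since both definitions reserve a special value for "identical" arguments.

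\emph{Step 1 (conventions are inactive).} Since $F_1 \neq F_2$, the clusters $C_1$ and $C_2$ are not identical, so the convention $\kappa^{**}(C,C)=0$ does not apply and $\kappa^{**}(C_1,C_2)$ is given by the quotient formula. On the term side, $\equiv_\otimes$ is generated by commutativity, and commutativity preserves atom sets. An induction over the congruence therefore shows that $t_1 \equiv_\otimes t_2$ implies $\mathrm{At}(t_1)=\mathrm{At}(t_2)$. Since $\mathrm{At}(t_1)=F_1\neq F_2=\mathrm{At}(t_2)$, the terms are not $\equiv_\otimes$-equivalent, so $\kappa^*(t_1,t_2)$ is given by the averaging formula of Section~\ref{sec:scoring-semantics}. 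This is the step I regard as the real content of the hypothesis $F_1\neq F_2$. Without it the two sides can disagree: equal factor sets give $\kappa^{**}(C,C)=0$, whereas non-equivalent terms over the same atoms (for instance, differently bracketed ones) get the generally nonzero average.

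\emph{Step 2 (cancellation).} Write $\rho_1\equiv c_1$ and $\rho_2\equiv c_2$. Because participation weights take values in $(0,1]$, we have $c_1,c_2>0$, so the denominator $c_1|F_1|\cdot c_2|F_2|$ is nonzero. The numerator is $c_1c_2\sum_{f\in F_1}\sum_{g\in F_2}\kappa(f,g)$. The factor $c_1c_2$ cancels, leaving
\[
\kappa^{**}(C_1,C_2)=\frac{1}{|F_1||F_2|}\sum_{f\in F_1}\sum_{g\in F_2}\kappa(f,g).
\]
Substituting $F_i=\mathrm{At}(t_i)$, this is exactly the lifted interaction $\kappa^*(t_1,t_2)$.

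\emph{Step 3 (shared factors).} If $F_1\cap F_2\neq\varnothing$, both double sums contain the diagonal terms $\kappa(f,f)$. In both formulas these are neutralized by the same global null-diagonal stipulation $\kappa(f,f)=0$. They still count toward $|F_1||F_2|$ in both denominators, so they do not break the agreement.

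The only step requiring care is Step 1: establishing that structural equivalence preserves atom sets, so that the hypothesis $F_1\neq F_2$ rules out both diagonal conventions. The rest is routine algebra.
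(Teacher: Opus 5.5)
Your proposal is correct and follows the paper's proof step for step: $F_1\neq F_2$ disables both null conventions, the common factor $c_1c_2$ cancels, and shared factors contribute the same diagonal zeros to both sides. Your argument that $\equiv_\otimes$ preserves atom sets simply spells out what the paper asserts in one clause, and the one small imprecision is in your motivating aside: the convention $\kappa^{**}(C,C)=0$ is triggered by identity of clusters, not by equal factor sets, so distinct clusters over a common $F$ receive the generally nonzero weighted average (the identity still fails there, but via a choice of $\equiv_\otimes$-equivalent terms, for which $\kappa^*=0$).
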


\begin{proof}
Since $F_1 \neq F_2$, no terms with these atom sets are
$\equiv_\otimes$-equivalent and the clusters are not identical, so
neither null convention applies and both sides are computed by their
averaging formulas. With $\rho_i \equiv c_i$, numerator and
denominator carry the common
factor $c_1 c_2$, which cancels, leaving the unweighted average
$\frac{1}{|F_1||F_2|}\sum\sum \kappa(f,g)$, which is the definition of
$\kappa^*$ on the atom sets. Shared factors contribute the same
diagonal zeros to both expressions.
\qed
\end{proof}

\paragraph{Remark (necessity of the restriction).}
The restriction to $F_1 \neq F_2$ is not decorative. The framework
deliberately admits distinct clusters over the \emph{same} factor
set---same material, different asserted organizations
$\kappa_{C_1} \neq \kappa_{C_2}$---and for such a pair the two
sides of the equation are governed by different conventions:
$\kappa^{**}(C_1,C_2)$ is the weighted average of global $\kappa$
over $F \times F$, generally nonzero, while term-level values over a
shared atom set depend on the $\equiv_\otimes$-structure of the
chosen terms. No uniform identity can hold there, and none should:
two organizations of the same material are exactly the case in which
cluster-level and term-level structure come apart.

\paragraph{Remark (material vs.\ structural rivalry).}
$\kappa^{**}$ as defined is \emph{material}: it measures the
compatibility of the factor bases under the shared prior, and is
blind to disagreement between the asserted structures themselves.
Two clusters over the same factors asserting incompatible internal
organizations receive whatever $\kappa^{**}$ the global $\kappa$
assigns---possibly positive---although they are rivals in an evident
structural sense. A refined rivalry valuation would decompose into a
material component $\kappa^{**}_{\mathrm{mat}}$ (the present
definition) and a structural component
$\kappa^{**}_{\mathrm{struct}}$ measuring divergence between
$\kappa_{C_1}$ and $\kappa_{C_2}$ on shared pairs, with governance
sensitive to both. The minimal framework records only the material
component; the structural refinement is registered in
Section~\ref{sec:future-directions}.

Since $|\kappa|\le 1$ and weights are positive,
$\kappa^{**}(C_1,C_2)\in[-1,1]$: the inter-cluster relation lives in
the same range as atomic interaction and admits the same reading.
Positive $\kappa^{**}$ marks decompositions that are jointly
tenable---candidates for merger or joint refinement; negative
$\kappa^{**}$ marks genuinely competing accounts of the same
explanandum, whose rivalry the governance layer must adjudicate.

\subsection{Two-Level Governance}
\label{subsec:analytic-governance}

Satisfaction extends to decomposition claims exactly as for content
formulas; decomposition judgments $P(C \triangleright E)$ and
$C_\tau(C \triangleright E)$ are the analytic instances of the
judgment sort of Section~\ref{sec:language}.
Relative to the activation floor and threshold carried by $S$:
\[
S \Vdash P(C \triangleright E)
\iff
\mathrm{sc}_S(C \triangleright E) \ge \epsilon,
\qquad
S \Vdash C_{\tau}(C \triangleright E)
\iff
\mathrm{sc}_S(C \triangleright E) \ge \tau,
\]
and the two derivation modes transfer verbatim:
\[
\vdash_S^p (C \triangleright E)
\iff
\epsilon \le \mathrm{sc}_S(C \triangleright E) < \tau,
\qquad
\vdash_S^c (C \triangleright E)
\iff
\mathrm{sc}_S(C \triangleright E) \ge \tau .
\]
A state in which several clusters satisfy
$\vdash_S^p(C_i \triangleright E)$, typically with pairwise negative
$\kappa^{**}$, is a state of \emph{suspended decomposition}: the
analytic counterpart of suspended derivation, and equally a
legitimate, analytically productive inferential output rather than a
failure to decide.

Governance operates at two levels. At the \emph{cluster level}, the
rival-sensitive commitment of
Definition~\ref{def:rival-sensitive} carries over with
$\kappa^{**}$ in the role of the lifted interaction:
\[
\mathrm{Commit}_S(C \triangleright E)
\iff
\vdash_S^c (C \triangleright E)
\ \text{and, for every rival } C' \text{ with }
\mathrm{sc}_S(C' \triangleright E) \ge \epsilon
\text{ and } \kappa^{**}(C,C') < 0,
\]
\[
\mathrm{sc}_S(C \triangleright E)
- \mathrm{sc}_S(C' \triangleright E)
\;\ge\;
\delta\big(\tau,\, -\kappa^{**}(C,C')\big).
\]
Inter-cluster interaction thereby directly governs commitment rather
than merely describing the rivalry structure: a decomposition that
clears $\tau$ while a strongly incompatible rival stands within the
margin remains in suspended decomposition until the rivalry is
resolved---the central protection against causal misattribution,
since committing intervention resources on the strength of a
hair's-breadth lead over an incompatible account is precisely the
premature convergence the analytic mode is designed to prevent.
First-past-$\tau$ is again the degenerate case $\delta \equiv 0$.
At the
\emph{factor level}, $\tau$ regulates commitment to an individual
factor as an intervention target: factor $f \in F_C$ is individually
commit-worthy against $E$ precisely when its singleton subcluster
already suffices,
\[
\mathrm{sc}_S\big(\langle \{f\},
\rho_C{\restriction}_{\{f\}},
\kappa_C{\restriction}_{\{f\}\times\{f\}}\rangle
\triangleright E\big) \ge \tau .
\]
The two levels can dissociate, and the dissociation is diagnostically
significant: a cluster may commit while none of its factors commits
individually, in which case the logic certifies the \emph{coupling}
of factors---not any factor alone---as the operative causal
structure. What is then formally identifiable as causal
misattribution is treating a singleton factor as the causal
explanation: no singleton carries commitment warrant. Whether an
intervention confined to one factor would nonetheless prove
effective is a further question the present semantics---which
contains no do-operator or action--outcome model---does not
adjudicate; what it withholds is the explanatory warrant such an
intervention would presuppose. The worked example below
exhibits exactly this configuration.

The dynamics of Section~\ref{sec:derivation} transfer in
\emph{form}, with one typing caveat: what evidence bundles update in
the analytic mode are participation profiles, not state weights. For
each active candidate cluster, the profile evolves by the
participation analogue of the update rule,
\[
\rho_{t+1,C}(f)
=
\Big[\, \rho_{t,C}(f)
+ \eta \sum_{o \in E_{t+1}} \sigma(o)\,\kappa_o(o,f)
\,\Big]_0^1 ,
\]
the same $\kappa_o$-mediated functional form acting on a different
object. The explanandum itself may
grow as new aspects are accepted, and the stabilisation alternatives
(commitment, disconfirmation, pre-emption, fixed point) are now read
at the cluster level, with pre-emption regulated by the
rival-sensitive condition above.

\subsection{Worked Continuation: Analytic Decomposition of the Crisis Scenario}
\label{subsec:analytic-example}

The synthetic example of Section~\ref{sec:example} composed the
emergent hypothesis $H_1 \otimes H_4$ (mobility--workplace coupling)
from atomic transmission hypotheses. The analytic mode runs the same
scenario in the opposite direction: from the observed outbreak
pattern to the latent factor structure that accounts for it. We
retain the interaction structure of Section~\ref{sec:example},
relabelling the transmission hypotheses as latent factors---$f_1$
(transport-hub transmission), $f_2$ (workplace transmission), $f_3$
(school transmission), $f_4$ (mass-event transmission)---so that
$\kappa(f_1,f_2)=+0.6$, $\kappa(f_1,f_3)=-0.4$, and
$\kappa(f_3,f_4)=+0.3$ are the core interactions specified there,
and $\kappa(f_2,f_3)=-0.2$, $\kappa(f_1,f_4)=+0.1$,
$\kappa(f_2,f_4)=0$ are the completing values already fixed for the
rival-sensitive check: the two modes run on \emph{identical}
interaction data. Throughout, $\lambda=1$, $\tau=0.85$,
$\epsilon=0.25$, and the margin function
$\delta(\tau,x)=0.5\,\tau x$, all as in the synthetic example.

\paragraph{Initial explanandum.}
Early surveillance yields two salient aspects:
$e_1$, a case surge concentrated in commuter towns ($s=1.0$), and
$e_2$, pronounced weekday periodicity of incidence ($s=0.8$).
Epidemiological assessment supplies the compatibility profile:

\[
\begin{array}{c|cccc}
\kappa_o & f_1 & f_2 & f_3 & f_4\\
\hline
e_1 & +0.8 & +0.5 & +0.4 & \phantom{+}0.0\\
e_2 & +0.6 & +0.7 & +0.7 & -0.2\\
e_3 & +0.4 & +0.8 & -0.6 & +0.2\\
e_4 & +0.1 & +0.2 & -0.8 & \phantom{+}0.0
\end{array}
\]

\noindent (rows $e_3,e_4$ become operative below). Two candidate
decompositions are proposed:
\[
C_1 = \langle \{f_1,f_2\},\, \rho(f_1)=0.5,\ \rho(f_2)=0.4 \rangle,
\qquad
C_2 = \langle \{f_3,f_4\},\, \rho(f_3)=0.6,\ \rho(f_4)=0.3 \rangle,
\]
a mobility--workplace decomposition and a gatherings-centred
decomposition (schools and mass events). Neither decomposition
asserts a deviant internal organization, so both carry the default
$\kappa_C = \kappa{\restriction}_{F_C \times F_C}$.

\paragraph{Suspended decomposition.}
Against $E_0=\langle\{e_1,e_2\},s\rangle$:
\[
\mathrm{cov}_{C_1}(e_1)=0.60,\quad
\mathrm{cov}_{C_1}(e_2)=0.58,\quad
\mathrm{fit}_S(C_1 \triangleright E_0)
=\frac{1.0(0.60)+0.8(0.58)}{1.8}=0.591,
\]
\[
\iota_S(C_1)=(0.5)(0.4)(0.6)=0.12,
\qquad
\mathrm{sc}_S(C_1 \triangleright E_0)=0.591+0.12=0.711,
\]
and for the rival,
\[
\mathrm{cov}_{C_2}(e_1)=0.24,\quad
\mathrm{cov}_{C_2}(e_2)=0.36,\quad
\mathrm{fit}_S(C_2 \triangleright E_0)=0.293,
\]
\[
\iota_S(C_2)=(0.6)(0.3)(0.3)=0.054,
\qquad
\mathrm{sc}_S(C_2 \triangleright E_0)=0.347 .
\]
Both decompositions are active but neither is commit-worthy:
\[
\vdash_S^p (C_1 \triangleright E_0),
\qquad
\vdash_S^p (C_2 \triangleright E_0),
\]
with
\[
\kappa^{**}(C_1,C_2)
=
\frac{(0.5)(0.6)(-0.4)+(0.5)(0.3)(0.1)+(0.4)(0.6)(-0.2)+0}
{(0.9)(0.9)}
= -0.189 :
\]
the decompositions are mildly competing, and the state is one of
suspended decomposition---strong inferential content, quantified
rivalry, no premature commitment of intervention resources.

\paragraph{Discriminating evidence.}
Subsequent surveillance accepts two further aspects: $e_3$, an age
distribution skewed towards working adults ($s=0.9$), and $e_4$,
absence of school-linked case clusters ($s=0.7$), enlarging the
explanandum to $E_1$. The same bundle updates participation profiles
through the participation analogue of the update rule
(Section~\ref{subsec:analytic-governance}) with
$\eta=0.25$ and $\sigma \equiv 1$:
\[
\Delta \rho(f_1)=+0.125,\quad
\Delta \rho(f_2)=+0.25,\quad
\Delta \rho(f_3)=-0.35,\quad
\Delta \rho(f_4)=+0.05,
\]
yielding
$C_1' = \langle \{f_1,f_2\}, (0.625,\,0.65)\rangle$ and
$C_2' = \langle \{f_3,f_4\}, (0.25,\,0.35)\rangle$, the default
$\kappa_C$ unchanged.

Against $E_1$, coverage for $C_1'$ is
$\mathrm{cov}(e_1)=0.825$, $\mathrm{cov}(e_2)=0.830$,
$\mathrm{cov}(e_3)=0.770$, $\mathrm{cov}(e_4)=0.193$, whence
\[
\mathrm{fit}_S(C_1' \triangleright E_1)=0.681,
\qquad
\iota_S(C_1')=(0.625)(0.65)(0.6)=0.244,
\]
\[
\mathrm{sc}_S(C_1' \triangleright E_1)=0.925 \;\ge\; \tau,
\qquad\text{triggering}\qquad
\vdash_S^c (C_1' \triangleright E_1).
\]
The rival collapses in the opposite direction: negative
compatibilities drive $\mathrm{cov}_{C_2'}(e_3)$ and
$\mathrm{cov}_{C_2'}(e_4)$ to the clamp at $0$, leaving
$\mathrm{fit}_S(C_2' \triangleright E_1)=0.054$ and
$\mathrm{sc}_S(C_2' \triangleright E_1)=0.080 < \epsilon$: the
gatherings-centred decomposition exits the active state entirely.
The rival-sensitive commitment condition of
Section~\ref{subsec:analytic-governance} is thereby satisfied
non-degenerately: at the moment of commitment, no rival remains
active ($\mathrm{sc} \ge \epsilon$), so the margin condition holds
vacuously, and $\mathrm{Commit}_S(C_1' \triangleright E_1)$ follows.
Had $C_2'$ instead stabilized just below $C_1'$ while their
$\kappa^{**}$ remained negative, commitment would have been
withheld---suspended decomposition maintained---until the separation
exceeded $\delta(\tau, -\kappa^{**})$.

\paragraph{Factor-level check.}
Within the committed decomposition, neither factor commits
individually:
\[
\mathrm{sc}_S\big(\langle\{f_1\},0.625\rangle \triangleright E_1\big)
=0.314,
\qquad
\mathrm{sc}_S\big(\langle\{f_2\},0.65\rangle \triangleright E_1\big)
=0.367,
\]
both far below $\tau$. What the logic certifies is the
\emph{coupling}: the decomposition commits at the cluster level while
blocking commitment to either single-factor account. An intervention
strategy targeting transport hubs alone or workplaces alone is
thereby formally marked as causal misattribution---the analytic
counterpart of the observation, in the synthetic example, that the
composite hypothesis $H_{1,4}$ suggests coordinated interventions
that neither single-factor model would generate.

\paragraph{Round trip.}
The committed decomposition $C_1'$ recovers, from the observed
outbreak pattern, exactly the mobility--workplace coupling that the
synthetic mode composed as the emergent hypothesis $H_1 \otimes H_4$
in Section~\ref{sec:example}. The scope of this observation should
be stated with care. It is a demonstration of \emph{illustrative
consistency}: on a shared scenario, with a shared interaction
relation and shared governance parameters, the two modes converge on
the same explanatory structure from opposite directions. It is not an
inversion theorem---the two valuation functions differ in their
baselines (lattice join over term scores; salience-weighted fit over
participation profiles), and Proposition~\ref{prop:mode-correspondence}
establishes a structural correspondence, not a semantic identity.
What the two modes demonstrably share is the governance schema: a
baseline valuation perturbed by a $\lambda$-scaled interaction term,
$\epsilon$/$\tau$-regulated judgment, suspended and collapse
derivation, and rival-sensitive commitment. A formal translation
between term scoring and cluster scoring, with conditions under which
one mode's scores determine the other's, is an open problem
registered in Section~\ref{sec:future-directions}.

\subsection{Position relative to Structural Causal Models}

The analytic mode also situates the framework relative to the
broader programme of causal reasoning. Pearl's structural causal
models \cite{pearl2009causality} presuppose a sufficiently specified
causal vocabulary---variables and structural equations, deterministic
or probabilistic---together with stable causal relationships; given
these, they support interventional and counterfactual inference of a
precision the present framework does not attempt. Analytic abduction
addresses the \emph{prior} problem: determining, under uncertainty,
which factors and which interactions deserve structural
formalization in the first place. The analytic mode
addresses the complementary regime---Knightian uncertainty, evolving
hypothesis spaces, genuinely novel situations---in which causality is
something one reasons \emph{towards} under suspended decomposition
rather than something assumed as a structural prior. On this reading,
a committed causal cluster is a natural \emph{precursor} of a
structural model: the point at which factor identity, participation,
and interaction structure have stabilized enough for structural
formalization to begin.

\section{Neurosymbolic Realization}
\label{sec:neurosymbolic}

The $\kappa$--$\tau$ logic has been presented as a pure semantics,
neutral with respect to the internal representation of hypotheses
(Section~\ref{sec:language}). This neutrality is what makes the
framework natively neurosymbolic: every epistemic input the logic
consumes is a quantity that neural components are well suited to
supply, while every normative parameter the logic enforces is a
quantity that should \emph{not} be learned. This section makes the
resulting division of labour explicit.

\paragraph{What the neural side supplies.}
The logic operates over abstract explanatory tokens equipped with
weights $w$, interactions $\kappa$, and evidence compatibilities
$\kappa_o$. None of these need be hand-authored. In the Quantum
Abduction implementation \cite{pare1}, hypotheses are
natural-language explanatory statements proposed by generative
models, embedded as vectors in a semantic space by modern sentence
encoders; pairwise interaction is estimated from embedding geometry,
and candidate syntheses of interacting hypotheses are themselves
generated by the language model and re-embedded for evaluation. The
same pipeline instantiates the analytic mode in the companion
framework \cite{pareschi2026analytic}, where latent factors, their
compatibility with observed aspects, and candidate causal clusters
are proposed and scored through the identical embedding machinery,
and an earlier empirical study \cite{ghisellini2025entangled}
demonstrated semantically estimated interaction among competing
strategic heuristics in agent-augmented reasoning. These systems demonstrate that
the parameters $w$, $\kappa$, and $\kappa_o$
are \emph{computationally estimable or supplyable} in practice from
unstructured content---through embedding geometry, heuristic
computation, or expert elicitation, depending on the deployment. They
do not yet demonstrate end-to-end learning calibrated against
reasoning outcomes, which remains among the future directions
(Section~\ref{sec:future-directions}); what they establish is that
the epistemic interface exists and is populated by exactly the
quantities neural components are suited to supply.

\paragraph{What the symbolic side enforces.}
What the neural components must not decide is when to commit. The
threshold $\tau$, the activation floor $\epsilon$, the margin
function $\delta$, and the governance
policies built over them (rival-sensitive pre-emption, two-level
commitment in the analytic mode) encode the risk posture of the
deployment domain: how much evidence strength is demanded before an
irreversible decision, at what point an explanation is active enough
to shape resource allocation, when a decomposition may direct
intervention. These are normative choices, accountable to domain
governance rather than to training data, and the logic keeps them
symbolic, inspectable, and fixed independently of the learned
components.

The resulting boundary is best stated in provenance-aware form:
\[
\begin{array}{ll}
w,\ \kappa_o: & \text{primarily learned or computed,}\\
\kappa: & \text{computed, learned, or expert-calibrated,}\\
\tau,\ \epsilon,\ \delta: & \text{institutionally governed.}
\end{array}
\]
The middle row carries a qualification that the convenient slogan
``the boundary is drawn at the $\kappa$/$\tau$ interface'' elides:
some interaction values---notably operational incompatibilities
between candidate decompositions, of the kind $\kappa^{**}$
aggregates---are normative or expert-elicited rather than purely
neural in origin. The hard boundary is therefore between
\emph{estimable epistemic structure}, however mixed its provenance,
and \emph{governed normative structure}, which is never learned: the
epistemic side $\langle \mathsf{H}, w, \kappa, \kappa_o \rangle$
is open to computational estimation, the normative side
$\langle \tau, \epsilon, \delta \rangle$ is not.

\paragraph{Architectural reading.}
In the taxonomy of neurosymbolic integration patterns
\cite{kautz2022third,garcez2023neurosymbolic,hitzler2022nesy}, the
resulting architecture is a pipelined hybrid: neural components
generate and quantify explanatory content, and a symbolic layer
performs structured, auditable inference over it. What the
$\kappa$--$\tau$ logic adds to this familiar pattern is a
\emph{semantics for the interface itself}. Suspended derivation and
suspended decomposition are exactly the states in which
neurally supplied content is held, graded, and composed without
commitment; every collapse is an explicit, threshold-certified event
leaving an audit trail of scores, interactions, and governance
parameters. The reasoning layer is thereby transparent in the sense
that motivates symbolic components in neurosymbolic systems: each
commitment is reconstructible from the valuation dynamics that
produced it, and each deferral is a formally sanctioned state rather
than a silent failure. Conversely, the logic without neural grounding
would face a knowledge-acquisition burden---eliciting $w$, $\kappa$,
and $\kappa_o$ by hand---that embedding-based estimation discharges.
The dependency runs in both directions, which is what makes the
integration neurosymbolic rather than merely modular.

Consolidation of this division of labour into a general architecture,
including end-to-end calibration of the learned parameters against
reasoning outcomes, is discussed among the future directions
(Section~\ref{sec:future-directions}).

\section{Related Work}
\label{sec:related-works}

The proposed $\kappa$--$\tau$ logic is intentionally minimal.
It does not aim to replace existing abductive frameworks,
but to formalize core structural features of the approach:

\begin{itemize}
    \item coexistence prior to collapse
    \item interaction-dependent synthesis
    \item governance-dependent commitment
\end{itemize}

Unlike probabilistic or purely logical systems,
collapse is regulated by normative thresholds rather than
forced by inference.

\paragraph{Position within abductive traditions.}
Traditional abductive approaches in AI---logic-based (e.g., Abductive
Logic Programming~\cite{kakas1992abductive}), Bayesian (e.g.,
statistical abduction~\cite{ishihata2011}), and set-covering
(e.g., parsimonious covering theory~\cite{reggia1983diagnostic})---can
each retain multiple explanations: abductive logic programs may admit
several solutions, statistical abduction maintains distributions over
explanations, and parsimonious covers are characteristically
multi-factor. What they do not generally combine is explicit
compositional cross-hypothesis interaction with an internal,
rival-sensitive commitment judgment. The $\kappa$--$\tau$ logic
supplies exactly this combination, modelling pre-collapse coexistence
and interaction-driven synthesis as first-class inferential states. The interaction structure
induced by $\kappa$ bears some resemblance to explanatory coherence
models developed in cognitive science \cite{thagard2000coherence},
where hypotheses may reinforce or inhibit one another within a network
of explanatory relations. The present framework differs in providing an
explicit logical language and a compositional synthesis operator for
representing such interactions.

\subsection{Relations to Logical Traditions and Reasoning Frameworks}

The proposed $\kappa$--$\tau$ framework does not belong to any
single established logical tradition. Its structural assumptions
depart from the dominant model-theoretic conception of logic
associated with the Fregean--Tarskian paradigm, where inference
is grounded in truth-conditional semantics and derivation is
typically tied to closure conditions. This subsection situates the
framework relative to several specific traditions and identifies
the precise structural points of contact and departure.

\paragraph{Uncertainty and graded plausibility.}

Several formal frameworks model reasoning under uncertainty using
graded, non-probabilistic valuations. Possibilistic logic
(Dubois and Prade \cite{dubois1988possibility,dubois1994possibilistic}) assigns necessity and possibility degrees to
formulas, yielding a qualitative account of uncertainty that,
like the present framework, avoids the additive constraints of probability.
However, possibilistic inference is organized around ranking and
threshold selection, and interaction between retained
hypotheses---in the compositional, score-perturbing form developed
here---is not normally represented. Spohn's ranking functions \cite{spohn2012laws} provide an
ordinal plausibility calculus with a well-developed theory of
iterated revision, but again without an interaction operator
analogous to $\kappa$. Halpern's general plausibility measures \cite{halpern2003reasoning,friedman1995plausibility}
abstract over both probabilistic and possibilistic frameworks,
offering a unifying algebraic treatment. Related ideas appear in
valuation-based systems for reasoning under uncertainty developed by
Shenoy \cite{shenoy1992valuation}, where information
is represented through valuations combined by structured operators.
The framework shares the graded valuation substrate of these approaches, but
introduces two features absent from all of them: epistemic interaction
($\kappa$) as a compositional operator on hypotheses, and a normative
threshold ($\tau$) that structurally decouples inference from
commitment.

\paragraph{Belief functions and uncommitted mass.}

Dempster-Shafer theory \cite{shafer1976mathematical,dempster1967upper} provides an alternative to Bayesian
probability in which belief functions assign mass to subsets of
the hypothesis space, and the total mass need not be committed
to specific singletons. The ``uncommitted mass'' assigned to
the full frame of discernment formally represents epistemic
suspension---an explicit refusal to distribute belief across
individual hypotheses. This resonates with the framework's treatment of
the pre-collapse state as a legitimate inferential output.
However, Dempster-Shafer theory combines evidence through
Dempster's rule of combination, which is commutative,
associative, and does not model an explicit pair-indexed explanatory
interaction relation between retained hypotheses. The synthesis operator, by contrast, is
non-associative (Theorem~\ref{thm:non-associativity}) and
interaction-dependent, capturing the context-sensitivity that
Dempster's rule cannot represent. Furthermore, Dempster-Shafer
theory does not incorporate a governance threshold: the
transition from uncommitted mass to committed belief is not
formally regulated.

\paragraph{Belief revision.}

The AGM framework \cite{alchourron1985logic,gardenfors1988knowledge} (Alchourr\'on, G\"ardenfors, and
Makinson) provides the standard account of rational belief
change through contraction, expansion, and revision operators
satisfying rationality postulates. The framework departs from AGM in a
fundamental respect: AGM revision presupposes that the belief
state is always closed---the agent holds a deductively closed
theory at every stage. The framework instead models epistemic states that
are explicitly \emph{non-closed}, with closure (collapse)
treated as a governed transition rather than a standing
structural property. The suspended derivation $\vdash^p$
has no AGM counterpart; it represents a maintained
state of justified non-commitment that AGM's closure
assumption precludes.

\paragraph{Many-valued logics and algebraic semantics.}

The scoring semantics operates over the unit interval
$[0,1]$ with lattice operations (min, max) and a product
t-norm structure, placing it within the family of many-valued
logics studied extensively by H\'ajek \cite{hajek1998metamathematics}. The base structure
$\langle [0,1], \min, \max, \oplus, \otimes_P \rangle$ is a
bounded distributive lattice expanded with the product t-norm and its
dual t-conorm (Section~\ref{sec:algebraic-semantics}). The framework's distinctive
contribution at the algebraic level is the $\kappa^*$-perturbation
of the lattice join (Theorem~\ref{thm:kappa-free-reduction}
establishes that the join is recovered at $\kappa^*=0$), placing
synthesis among the compensatory operators of
\cite{zimmermann1980latent} but with a pair-indexed compensation
coefficient. The operators $P$ and $C_\tau$ are threshold judgment
operators rather than modalities: they carry no accessibility
semantics and do not embed. They resemble the evaluation pattern of
Fitting's many-valued modal logics
\cite{fitting1991manyvalued,fitting1992manyvalued} only in that
satisfaction is determined by threshold conditions on valuations;
the Kripkean apparatus of accessibility over possible worlds has no
counterpart here.

\paragraph{Quantum cognition and interference-based models.}
The vocabulary of coexistence, interference, and collapse deliberately
echoes quantum cognition \cite{busemeyer2012quantum}, where
mathematical structures borrowed from quantum theory model context
effects and interference in human judgement, without any claim that
cognition is physically quantum-mechanical. The connection is
genealogical as well as structural: as noted in the introduction, the
$\kappa$--$\tau$ logic distils the governance mechanics of the Quantum
Abduction framework \cite{pare1}, in which hypotheses are embedded in
a semantic vector space, interaction is estimated from embedding
geometry, and superpositional coexistence with interference-aware
synthesis was exercised computationally on forensic, clinical, and
historical case studies, with a publicly available proof-of-concept
implementation; a companion empirical line demonstrated interaction
among competing strategic heuristics in agent-augmented reasoning
\cite{ghisellini2025entangled}. Relative to that programme, the
present paper isolates the logical kernel: amplitudes and
Hilbert-space projections are replaced by a graded scoring semantics
whose interaction and commitment structure can be studied
algebraically, with the geometric semantics deferred to the extension
outlined in Section~\ref{sec:future-directions}.

\paragraph{Dialectical argumentation and non-closure.}

A partial parallel exists with dialectical
proof procedures in assumption-based argumentation
(Dung \cite{dung1995acceptability}; Thang et al.
\cite{thang2022infinite}), where proof procedures may fail to
terminate and \emph{strict admissibility} is introduced to
characterize what non-terminating procedures compute. The phenomena
should not be conflated: there, the issue is procedural
non-termination of argument construction, treated as an obstacle to
be worked around; here, suspended derivation is a \emph{semantic}
condition---a stabilized valuation whose closure is deliberately
withheld---and an epistemically legitimate output. The parallel is
nonetheless instructive in one respect: both lines of work find that
the boundary between settled and unsettled inferential states is
structurally significant and demands dedicated formal machinery
rather than dismissal as pathology.

\paragraph{Bounded rationality and ecological decision-making.}

The bounded rationality tradition
(Gigerenzer \cite{gigerenzer1999simple,gigerenzer2002bounded};
Simon \cite{simon1955behavioral}, and
collaborators) shares with the present framework the diagnosis that standard
probabilistic reasoning is inadequate for real decision-making
environments---a point documented in clinical settings by
Martignon, Erickson, and Viale \cite{martignon2022transparent}, who show that physicians
systematically fail to perform correct Bayesian inference under
time pressure. However, the therapeutic responses diverge.
The bounded rationality program advocates simplification:
fast-and-frugal heuristics, lexicographic decision trees, and
\emph{satisficing} strategies---Simon's term for accepting
solutions that are good enough rather than optimal---that
exploit environmental structure through cognitive economy.
The $\kappa$--$\tau$ logic takes the complementary approach
of \emph{enriching} inference with interaction and governance,
targeting precisely the decision regimes---high-stakes,
low-frequency, irreversibility-sensitive---where heuristic
simplification would itself be dangerous. Fast-and-frugal trees
are eliminative by design: at each node, one branch exits
immediately with a classification. This architectural property,
which is a virtue in high-volume, moderate-stakes environments
(speed, transparency, low cognitive load), becomes a liability
in tail-risk domains, where it structurally accelerates
commitment and forecloses the sustained investigation that
asymmetric downside costs demand. Fast-and-frugal trees cannot represent
suspended judgment, hypothesis interaction, or governance-regulated
commitment. The two approaches are thus complementary rather
than competing, addressing different segments of the decision
landscape.

\paragraph{Neurosymbolic AI.}

The neurosymbolic programme
\cite{kautz2022third,garcez2023neurosymbolic,hitzler2022nesy} seeks
architectures in which neural components contribute learning from
unstructured data and symbolic components contribute sound,
inspectable reasoning. Much of the literature concentrates on
integrating neural learning with deductive apparatus---logic
programming, knowledge graphs, differentiable theorem proving---where
the symbolic layer certifies entailment. The $\kappa$--$\tau$ logic
targets a different integration point: \emph{abductive} reasoning
under sustained uncertainty, where the symbolic layer certifies not
entailment but governed commitment. As developed in
Section~\ref{sec:neurosymbolic}, the framework assigns the epistemic
parameters ($w$, $\kappa$, $\kappa_o$) to neural estimation and
reserves the normative parameters ($\tau$, $\epsilon$, $\delta$)
for explicit
governance, so that the estimable/governed boundary is itself a formal
feature of the logic rather than an implementation convention. Recent surveys do organize parts of the field around uncertainty
quantification, intervenability, human governance, and deferral under
detected uncertainty; what we have not identified is an existing
neurosymbolic taxonomy that treats the separation between estimable
epistemic structure and institutionally governed commitment as a
\emph{primitive of an abductive semantics}---as a formal boundary
internal to the logic, rather than a property of how neural and
symbolic computation interleave.

\paragraph{Summary.}

These comparisons identify the $\kappa$--$\tau$ framework's
distinctive position. The framework shares the graded valuation substrate
of many-valued and possibilistic logics, the non-additive
treatment of uncommitted belief found in Dempster--Shafer theory,
and the sensitivity to non-termination exhibited by dialectical
argumentation. It departs from all of these in two respects:
the interaction operator $\kappa$ introduces compositional
context-sensitivity absent from existing uncertainty frameworks,
and the normative threshold $\tau$ structurally decouples
derivation from closure in a combination---graded interaction-driven
synthesis under governed, rival-sensitive commitment---not normally
represented in standard logical or decision-theoretic frameworks.

The $\kappa$--$\tau$ logic, therefore, defines a distinct inferential regime centered on
epistemic interaction, suspended derivation, and normative closure.

\section{Future Directions}
\label{sec:future-directions}

The framework developed in this paper opens several lines of
extension, which fall broadly into three directions: structural
extensions of the logical semantics, computational integration with
learning-based systems, and conceptual extensions of abductive
reasoning itself.

\subsection*{Structural Extensions}

The algebraic semantics proposed here provides a minimal structural
foundation for abductive inference under the $\kappa$--$\tau$ framework. Several natural extensions follow.

First, geometric refinements may be introduced by embedding the
valuation structure within vector-space frameworks.
Hilbert-space-inspired semantics---in the lineage of quantum logic
\cite{birkhoff1936logic} and quantum cognition
\cite{busemeyer2012quantum}---offer one possible extension in which
hypotheses are represented as state vectors, scoring corresponds to
projections, and interaction effects correspond to operator-induced
transformations. This is not purely speculative: the Quantum Abduction
implementation \cite{pare1} already realizes a computational prototype
of this geometry, with hypotheses embedded as vectors in a semantic
space and interaction read off from embedding structure.

Second, transformational refinements may be obtained through
category-theoretic perspectives. The interaction and synthesis
structure induced by $\kappa$ and $\otimes$ suggests a potential
categorical interpretation of explanatory composition. In such
formulations, epistemic states may be modelled as objects, inference
steps as morphisms, and synthesis operators as monoidal constructions,
potentially connecting the present framework with recent categorical
approaches to compositional semantics and reasoning.

These perspectives should not be understood as alternatives to the
algebraic semantics, but as structured extensions providing additional
representational, geometric, or compositional expressiveness.

Third, a refinement of negation may decouple a hypothesis from its
complement. The present framework uses involutive negation
$\mathrm{sc}_S(\neg\varphi)=1-\mathrm{sc}_S(\varphi)$, which imposes a
duality and excludes joint plausibility of $\varphi$ and $\neg\varphi$
above $\epsilon=\tfrac12$. A natural refinement replaces the single score
with a possibility/necessity pair $(\pi,\nu)$, in the manner of
Dubois--Prade possibility theory or the belief/plausibility duality of
Dempster--Shafer, where $\nu(\varphi)\le 1-\pi(\neg\varphi)$ rather than
equality. This would let $\varphi$ and $\neg\varphi$ carry independent
non-additive support and is the natural setting in which to lift the
$\epsilon\le\tfrac12$ restriction noted in Section~\ref{sec:scoring-semantics}.

\subsection*{Learning and Calibration}

A second direction concerns the consolidation of the neurosymbolic
division of labour articulated in Section~\ref{sec:neurosymbolic}
into a general, empirically evaluated architecture. Three open
problems stand out. First, \emph{end-to-end calibration}: the
embedding-based estimation of $w$, $\kappa$, and $\kappa_o$
demonstrated in existing implementations
\cite{pare1,pareschi2026analytic} is currently feed-forward; feedback
from reasoning outcomes (committed conclusions later confirmed or
disconfirmed) should be used to refine the estimators, turning the
interaction structure into a genuinely learned quantity with error
signal. Second, \emph{governance calibration}: while $\tau$, $\epsilon$,
and the margin function $\delta$ must remain normatively
set, the mapping from domain risk profiles (asymmetry of downside
costs, reversibility of interventions) to governance parameters
$r \mapsto \langle \tau_r, \delta_r \rangle$ is itself
a modelling problem that deserves systematic treatment rather than
case-by-case stipulation. Third, \emph{benchmarking}: the framework's
claimed advantages over eliminative baselines---Bayesian model
selection, possibilistic ranking, fast-and-frugal trees---in
tail-risk regimes should be assessed on controlled decision tasks
where premature commitment carries measurable cost.

\subsection*{Completing the Analytic Mode}

Two open problems concern the relation between the two modes and the
expressiveness of the analytic kernel. First, a \emph{translation
theorem}: the round trip of Section~\ref{subsec:analytic-example}
exhibits illustrative consistency between synthesis and analysis, but
no formal translation $T$ from hypothesis terms to clusters has been
given, nor conditions under which
$\mathrm{sc}^{\mathrm{syn}}_S(t)$ and
$\mathrm{sc}^{\mathrm{an}}_S(T(t) \triangleright E)$ agree or at
least preserve order. Establishing such a result---or delimiting why
only the governance schema, and not the valuation, transfers---would
settle the exact sense in which the two modes are one logic. Second,
the \emph{relational explanandum}: restoring relational structure
$E = \langle A, R, s \rangle$ to the kernel of
Section~\ref{sec:analytic}, so that temporal order, dependency
topology, and structural novelty at the explanandum level become
representable, closing the expressiveness gap with the full analytic
framework \cite{pareschi2026analytic}.

Three further refinements were identified in the body of the paper
and are collected here. A \emph{directed interaction variant}: the
minimal framework reads $\kappa$ as symmetric compatibility
(Section~\ref{sec:language}); an asymmetric variant, in which
$\kappa(f,g)$ encodes directed influence and $\iota$ sums over
ordered pairs normalized by $|F_C|(|F_C|-1)$, would be more
expressive for causal modelling at the cost of separating the
compatibility and influence readings. \emph{Structural rivalry}: the
material $\kappa^{**}$ of Section~\ref{sec:analytic} is blind to
disagreement between asserted internal organizations; a structural
component $\kappa^{**}_{\mathrm{struct}}$ measuring divergence
between $\kappa_{C_1}$ and $\kappa_{C_2}$ on shared factor pairs,
combined with the material component in the governance condition,
would let the logic adjudicate same-material, different-structure
rivalry. \emph{Parsimony}: the analytic baseline inherits the
accumulation behaviour of weighted covering models
(Section~\ref{sec:analytic}); parsimony penalties, constraints on
total participation mass, or normalized coverage would let
governance police the resulting size bias where the domain demands
it.

\subsection*{Convergence of the Analytic and Geometric Extensions}

Finally, the analytic mode formalized in Section~\ref{sec:analytic}
opens a specifically geometric line of development: the convergence
of the causal-cluster machinery with the vector-space refinements
outlined above---causal clusters as vectors in a factor space,
inter-cluster interaction $\kappa^{**}$ through operator-induced
transformations, collapse as a projection condition. The companion
framework \cite{pareschi2026analytic} additionally develops the
protocol dimension of the analytic mode---the legibility of suspended
decomposition as a coordination object between human and artificial
agents in epidemiological crisis decomposition and adversarial cyber
threat analysis---whose full formal integration with the semantics
developed here, in particular the interaction between coordination
acts and the update dynamics, is a natural next step.

More broadly, future work may address:

\begin{itemize}
    \item richer algebraic interaction operators,
    \item dynamic update logics,
    \item proof-theoretic characterization,
    \item integration with learned representational spaces.
\end{itemize}

The $\kappa$--$\tau$ framework, therefore, opens a layered semantic program in which algebraic,
geometric and transformational structures jointly contribute to a
process-centered logic of abductive reasoning.

\section{Conclusion}
\label{sec:conclusion}
The $\kappa$--$\tau$ framework reframes abductive inference by
decoupling derivation from closure.

Rather than treating reasoning as a process culminating in the
selection of a single surviving hypothesis, the proposed logic
models inference as structured valuation dynamics in which
coexistence, interaction, and synthesis are inferentially
legitimate states.

Collapse is therefore not interpreted as a truth-theoretic
necessity, but as a normatively regulated transition governed
by epistemic thresholds.

This shift replaces eliminative reasoning architectures with a
process-centered inferential regime where unresolved tensions
are preserved, graded, and rendered analytically productive.

Within this perspective, logic functions not merely as a calculus
of validity, but as a coordination principle governing the evolution
of epistemic states.

The logic has been developed in two complementary modes sharing a
single interaction relation and a single governance apparatus. In the
synthetic mode,
hypotheses compose upward into emergent explanations; in the
analytic mode, complex explananda decompose into governed causal
clusters, with interaction and commitment operating at both the
cluster and the factor level. The two modes instantiate a common
$\kappa$--$\tau$ governance schema over different valuation
bases, and the worked crisis scenario
closes a round trip---the analytic mode recovering, from the
observed pattern, precisely the coupling that the synthetic mode
composed---exhibiting their consistency in practice, with the formal
translation between the modes registered as an open problem.

The framework is, moreover, natively neurosymbolic in its division
of labour: the epistemic parameters over which it reasons are
quantities that are computationally estimable or supplyable---through
embedding-based computation, learning, or expert
elicitation, though not yet learned end-to-end---while the normative parameters through which
it governs remain symbolic, inspectable, and accountable to domain
governance. The boundary between estimable epistemic structure and regulated
commitment---drawn, in provenance-aware form, between
$\langle w, \kappa, \kappa_o \rangle$ and
$\langle \tau, \epsilon, \delta \rangle$---rather than any
particular estimation technique, is
the framework's contribution to the neurosymbolic programme.

The risk-theoretic implications of this reframing are direct.
In domains characterized by asymmetric downside costs---where
the consequences of premature commitment vastly exceed the
costs of continued investigation---the standard inferential
imperative to resolve uncertainty as quickly as possible becomes
itself a source of risk. The $\kappa$--$\tau$ logic provides
formal machinery for resisting this imperative: the threshold
$\tau$ and the margin function $\delta$ provide the parameters
through which commitment can be calibrated to the stakes of
the decision, while the interaction operator $\kappa$ enables
the system to extract inferential value from sustained
hypothesis coexistence rather than treating it as a deficiency
to be eliminated. Suspended derivation is thus not a failure
to decide but a principled epistemic posture---one whose
risk-management value has been illustrated throughout this paper
in clinical diagnosis, crisis management, and causal
decomposition under uncertainty.

The resulting framework provides a formal foundation for
reasoning under sustained uncertainty, naturally aligning
algebraic structure and interaction-sensitive dynamics
with the normative demands of risk-sensitive decision-making.


\section*{Funding}

This work was supported by Ermete---grant number
B39J25000400005---2023 MIMIT (Ministero delle Imprese e del Made in
Italy).

\end{document}